\newtheorem{myDef}{Definition} 
\newtheorem{theorem}{Theorem}
\begin{document}

\begin{frontmatter}

\title{Hierarchical clustering by aggregating representatives in sub-minimum-spanning-trees}

\author[a,b]{Wen-Bo Xie}
\author[a,b]{Zhen Liu\corref{cor1}}
\author[c]{Jaideep Srivastava}

\address[a]{Big Data Research Center, School of Computer Science and Engineering, University of Electronic Science and Technology of China, Chengdu 611731, People's Republic of China.}
\address[b]{Web Sciences Center, School of Computer Science and Engineering, University of Electronic Science and Technology of China, Chengdu 611731, People's Republic of China.}
\address[c]{College of Science and Engineering, University of Minnesota, Minneapolis MN 55455, United States of America.}

\cortext[cor1]{Corresponding author at: Big Data Research Center, University of Electronic Science and Technology of China, Chengdu 611731, China. E-mail: quake.liu0625@gmail.com, quake@uestc.edu.cn (Zhen Liu)}

%\thanks{This work is partially supported by the National Natural Science Foundation of China under Grant No. 61703074, and the Fundamental Research Funds for the Central Universities under Grant No. ZYGX2016J196, and the Young Scholars Development Fund of SWPU under Grant No. 202199010142. }

% make the title area

% As a general rule, do not put math, special symbols or citations
% in the abstract or keywords.
\begin{abstract}

One of the main challenges for hierarchical clustering is how to appropriately identify the representative points in the lower level of the cluster tree, which are going to be utilized as the roots in the higher level of the cluster tree for further aggregation. However, conventional hierarchical clustering approaches have adopted some simple tricks to select the “representative” points which might not be as representative as enough. Thus, the constructed cluster tree is less attractive in terms of its poor robustness and weak reliability. Aiming at this issue, we propose a novel hierarchical clustering algorithm, in which, while building the clustering dendrogram, we can effectively detect the representative point based on scoring the reciprocal nearest data points in each sub-minimum-spanning-tree. Extensive experiments on UCI datasets show that the proposed algorithm is more accurate than other benchmarks. Meanwhile, under our analysis, the proposed algorithm has $O(n\log n)$ time-complexity and $O(\log n)$ space-complexity, indicating that it has the scalability in handling massive data with less time and storage consumptions.

\end{abstract}

% Note that keywords are not normally used for peerreview papers.
\begin{keyword}
Hierarchical Clustering; Reciprocal Nearest Neighbor; Roots Detection; Boundary-based Score
\end{keyword}

\end{frontmatter}

\section{Introduction}

With the explosive growth of information, some latent patterns behind the mass data without class annotations are really hard for people to understand. Thus, unsupervised learning algorithms, especially clustering algorithms, are widely applied to gain insight into complex datum and to discover subversive knowledge. The clustering algorithm has been followed more than one hundred years, and several types of clustering algorithms have been proposed, e.g., the partition algorithms \cite{Jain 2010, Shah 2017}, the density-based algorithms \cite{Rodrigue 2014, Wang 2009}, the affinity propagation algorithms \cite{Frey 2007}, the spectral-based algorithms \cite{Filippone 2008,Huang 2019}, and the hierarchical clustering algorithms \cite{Kobren 2017}. Hierarchical clustering algorithms organize data into tree-structure that provides people with intuitive informational contents. Benefiting from the structured and highly explicable results, hierarchical clustering algorithms are widely applied in the tasks of scientific analysis in disparate fields, such as gene-related analysis in biomedical science \cite{Song 2016, Ma 2017, Maganga 2020}, community detection in social networks \cite{Newman 2004, Wilkinson 2004, Rattigan 2007, Fortunato 2010}, pattern discovery in environmental assessment \cite{Dugan 2017}, and so on. 

Most hierarchical clustering algorithms link data points to generate a tree-structured result, say cluster tree \cite{Reddy 2018}. Some classical algorithms merge sub-clusters by using different linkage methods such as average-linkage-based algorithm \cite{Ward Jr 1963}, single-linkage-based algorithm \cite{Sneath 1973}, and so on. Up to now, the above classical hierarchical clustering algorithms are still widely used owing to their intuitive definition and simple operation, while they may fail in some cases because of producing elongated clusters, and are usually not able to scale up well \cite{Rokach 2009}.  

Therefore, on one hand, researchers proposed some novel strategies to speed up classical algorithms, e.g., the sampling-based methods and the stepwise algorithms. ROCK \cite{Guha 2000} and CURE\cite{Guha 2001} are typical sampling-based methods, which calculate the distance between clusters based on the fixed number of representative data points of clusters. CHAMELEON \cite{Karypis 1999}, one of the classical stepwise algorithms, applies nearest neighbor graph to divide data nodes into small clusters to reduce the number of iterations. Although the sampling-based algorithms and the stepwise algorithms are increasingly used in clustering applications due to their efficiency and extreme effectiveness in capturing arbitrarily shaped clusters \cite{Reddy 2018}, the issue of parameter sensitivity existing among these methods undermines the accuracy of clustering results. On the other hand, some researchers proposed new hierarchical clustering frameworks, e.g., clustering feature tree (CF-Tree) based algorithms, like BIRCH \cite{Zhang 1997} and its variants \cite{Kobren 2017,Ryu 2020}, realize the clustering operation by constructing binary trees for stream data.  However, CF-tree-based algorithms are highly sensitive to the different orders of data input.

In this paper, we propose a novel hierarchical clustering algorithm based on scoring the reciprocal-nearest-neighbors (RNNs for short), in which the RNNs points with the higher topological importance are treated as the representative data points (also named roots) of clusters to guide the aggregation of data points hierarchically. We specially convert the data points into a graph and detect the root in a given cluster by introducing four scoring functions and two of their combined forms. Moreover, to deal with the failure of the scoring method in some cases, we also propose a novel strategy to sample pairwise data points on the data boundary, which is used to measure the closeness between roots and data boundary. Extensive experiments on University of California Irvine (UCI) data sets \cite{Lichman 2013} show that the proposed algorithm performs overall better than classical benchmarks (i.e., group average method\cite{Ward Jr 1963}, CHAMELEON \cite{Karypis 1999}, CURE \cite{Guha 2001}) and the state-of-the-art method PERCH \cite{Kobren 2017} and RSC \cite{Xie 2020}.

The rest of the article is organized as follows. In Section 2, we review the related hierarchical clustering algorithms, i.e., classical linkage-based algorithms, sampling-based algorithms, stepwise algorithms, and CF-Tree-based algorithms. In Section 3, we introduce the formal description of clustering problem and the definitions of cluster tree and reciprocal nearest neighbors (RNNs for short). Then, in Section 4, we present a novel hierarchical clustering algorithm based on detecting sub-clusters' roots by scoring the RNNs data points. Meanwhile, some theorems and their proofs are presented to ensure the validity of the proposed scoring measures. We also perform experiments to analyze the scoring measures and compare the new algorithm with benchmark algorithms in Section 5. Furthermore, we analyze both time and space complexity of the proposed algorithm in Section 6. Finally, we summarize our work and give some future directions of this study in the last section.
% You must have at least 2 lines in the paragraph with the drop letter
% (should never be an issue)
%I wish you the best of success.
%
%\hfill mds
% 
%\hfill August 26, 2015
\section{Related Works}

As a big family, various clustering algorithms have their own specific strategies. Partition algorithms, e.g., k-means \cite{Jain 2010} and robust continuous clustering \cite{Shah 2017}, assign data points to the nearest cluster center which will be updated after each iteration. Density-based algorithms, e.g., DBSCAN \cite{Birant 2007} and density-and-distance-based clustering \cite{Rodrigue 2014}, calculate the local density and assign points to the nearest density center. The affinity propagation algorithm \cite{Frey 2007} is well-known for its nonconformist underlying mechanism that recursively transmits two types of messages, responsibility and availability, between nodes until suitable exemplars and corresponding clusters emerge. The above algorithms represent clustering results by using a map from nodes to cluster identities. Meanwhile, some structured results that can encode more fine-grained information, e.g., cluster tree, are more favored in some specified applications. 
%Therefore, the hierarchical clustering algorithm is a perennial favorite because of its tree-structured results. 

The subsets of a cluster tree also have tree-structure that usually correspond to internal nodes. Thus classical hierarchical aggregation clustering algorithms (HAC for short) were proposed based on different linkage methods to calculate the distance between internal nodes and merge them into new subsets. The group average clustering method (HAC-A) \cite{Ward Jr 1963} merges sub-clusters according to the average distance of the pairwise data points from two different clusters, i.e., average-linkage. Differently, the Sneath-Sokal method (HAC-S) \cite{Sneath 1973} considers that the distance between two clusters can be measured by calculating the distance between the two nearest data points from the two clusters respectively, i.e., single-linkage. In contrast, the furthest neighbor method (HAC-C) \cite{King 1967} calculates the distance between two clusters based on the longest distance existing among two members from the two clusters respectively, i.e., complete-linkage. Considering the efficiency, some equivalents \cite{Murtagh 1983, Lopez-Sastre 2012} were proposed to reduce the time cost by optimizing the update process of NN-chain. The above algorithms, however, do not essentially change the calculation rules of the distance between clusters, which are still lack of scalability \cite{Rokach 2009} and may perform poorly for being apt to produce elongated clusters.

To improve the robustness and scalability of HAC algorithms, some better strategies were proposed. The sampling-based methods (e.g., ROCK \cite{Guha 2001} and CURE \cite{Guha 2000}) consider the distance between clusters on the basis of the representatives that are obtained by shrinking well-selected data points to the center of the clusters. The stepwise algorithms accelerate the clustering process by using a na\"ive process to construct small clusters in advance. CHAMELEON \cite{Karypis 1999} is a typical stepwise algorithm that applies k-nearest-neighbor graph to partition data points before hierarchical aggregation, yet the parameter $k$ is highly sensitive to the model. Similarly, Bouguettaya et al. \cite{Bouguettaya 2015} combine k-means with classical agglomerative hierarchical clustering algorithm, e.g. group average method and Sneath-Sokal method, to perform efficient clustering, in which k-means is applied to produce middle-level clusters and a selected agglomerative hierarchical clustering algorithm is applied to build the final cluster tree. On the other hand, some researchers proposed new frameworks to address the issues of robustness and scalability, simultaneously. The clustering feature tree (CF-Tree) based algorithm BIRCH \cite{Zhang 1997} realizes the clustering operation by constructing binary trees with stream data. BIRCH can obtain a desired clustering results by just one round of scanning over the dataset and a few additional operations for the optimization of the cluster tree. However, BIRCH is fragile on the datasets with outliers and sensitive to the sequences of data. To be more robust, PERCH \cite{Kobren 2017} employs bounded box approximations, masking-based rotation and balance-based rotation operations in the tree-building process to prevent the negative impacts of the outliers and ensure to generate a shallow tree. 

\section{Term definition}

Before giving the detailed description of the proposed algorithm, we first introduce the concept of clustering problem.

Given a dataset $X=\{x_i|i=1,2,\dots,n\}, x_i \in \mathbb{R}^m$, the clustering problem is to divide the dataset $X$ into a set of disjoint subsets $\mathcal{C}$, say clusters, of which the union covers the dataset. A high-quality clustering is that the data points in any particular subset have closer relationships to each other than those in other subsets. Commonly, clustering algorithms, such as partition algorithms and density-based algorithms, represent clustering results by using a map from data points to clusters, $\mathcal{C}: X \to \{C_i|i=1,2,\cdots,k\}$. In this paper, we mainly focus on the issues of hierarchical clustering, an important branch of clustering algorithms. The output of hierarchical clustering is a cluster tree which can be defined as follows.
\begin{myDef}
Cluster Tree \cite{Krishnamurthy 2012}. A cluster tree $\tau$ on a dataset $\{x_i|i=1,2,\dots,n\}$ is a collection of subsets such that $C_0 \triangleq \{x_i|i=1,2,\dots,n\} \in \tau$ and for any subsets $C_i, C_j \in \mathcal{C}$. either $C_i \subset C_j$ or $C_j \subset C_i$ or $C_i \cap C_j = \varnothing$. In particular, there must exist a set of disjoint clusters $\{C_i|i=1,2,\dots,k\}$ such that $\bigcup_{i=1}^k C_i = \mathcal{C}$.
\end{myDef}
Cutting the tree at a given height will give a partitioning clustering at a selected precision. Next, we introduce another important concept, reciprocal nearest neighbors (RNNs), which is going to be frequently used in the later part of this paper. The RNNs is defined as,

\begin{myDef}
    RNNs \cite{Murtagh 1983}. If there exists two data points denoted by $x$ and $y$, the nearest neighbors for $x$ and $y$ could be $\delta_x$ and $\delta_y$, respectively. When two conditions, say $\delta_x = y$ and $\delta_y = x$, hold simultaneously, we call $(x, y)$ a pair of RNNs.
\end{myDef}

In this study, we treat a pair of RNNs as the core of the cluster as it could play a key role in detecting clusters accurately.

\section{Algorithm}

There are two main operations contained in the proposed algorithm: (1) construction of sub-MSTs, which is to merge data points into small groups (sub-clusters); (2) detection of roots, which is to detect the representative data points in each sub-clusters. Besides, an additional operation is also proposed to handle anomalous RNNs data points.

\subsection{Construction of sub-MSTs}

The primary purpose of this step is to group data points into different small clusters, each of which has a tree-style structure, and the main idea inherits from our previous work \cite{Xie 2020}. In order to fulfil this task, we treat the data points as nodes, and apply a searching process to construct a graph that could be considered as a fragmentized minimum-spanning-tree, in which each detected subgraph is a fragment of the minimum-spanning-tree, say sub-MST. 

Given a dataset with $n$ data points $X=\{x_i\}^n, x_i \in \mathbb{R}^m $, and the distances between them are defined as $\mathfrak{d}_{ij}=dist(x_i,x_j)$. Note that, in order to maintain geometric characteristics, we only adopt Euclidean distance in this article. Firstly, we initiate a candidate set $\Gamma$ containing all $n$ nodes, and start the searching process from a randomly selected node $x_i$ therein. Then $x_i$ will connect to its nearest neighbor $\delta_i^{(1)}$, where the nearest neighbor of the node $x_i$ is defined as,  
\begin{equation}
	\delta_i=\{x_j | \arg\min\limits_{1 \le j \le n, j \ne i} dist(x_i,x_j)\}.
\end{equation}
And next, once more, $\delta_i^{(1)}$ is going to connect its nearest neighbor $\delta_i^{(2)}$, and so forth. Notice that, to avoid confusion caused by node's multiple nearest neighbors, a very small random variable $\varepsilon$ is applied to disturb the distance between nodes as $\mathfrak{d}_{ij} \gets \mathfrak{d}_{ij}+\varepsilon_{ij}$, where $\varepsilon_{ij} \ll \mathfrak{d}_{ij}$. Hence, denoting $x_i =\delta_i^{(0)}$, this searching process will produce a chain $\tau = \{(\delta_i^{(m)},\delta_i^{(m+1)})\}_{m=0}^h$. The searching process terminates when one of the below conditions meets: 

\begin{enumerate}
	\item $\delta_i^{(h-1)} = \delta_i^{(h+1)}$ and $\delta_i^{(h+1)} \in \Gamma$. It means that $\delta_i^{(h-1)}$ and $ \delta_i^{(h)}$ form a pair of RNNs as $ \delta_i^{(h)}$ and $\delta_i^{(h+1)}$ are the nearest neighbors of the $\delta_i^{(h-1)}$ and $ \delta_i^{(h)}$, respectively. Then we treat the chain $\tau$ as a new detected sub-MST;
	\item $\delta_i^{(h+1)} \notin \Gamma$. It means that $\delta_i^{(h+1)}$ has been searched before, which belongs to an existing sub-MST $\tau^*$. Thus, the chain $\tau$ needs to link to the $\tau^*$. 
\end{enumerate}

After that, all the nodes $\{\delta_i\}_{0}^{h+1}$ in $\tau$ will be removed from the candidate set $\Gamma$ as $\Gamma \gets \Gamma - \{\delta_i\}_{0}^{h+1}$. The construction procedure of sub-MSTs completes if and only if $\Gamma = \varnothing$, which means all the nodes are traversed. The corresponding pseudocode is shown in Algorithm 1.

\begin{algorithm}
\caption{\textsc{subMSTsCons}}  
	\begin{algorithmic}[1]   
	\Require Data: $X=\{x_i\}^n$
 	\Ensure $\mathcal{C}$ 
            	\State $\mathcal{C}\gets \varnothing$
            	\State $\Gamma \gets X$
            	\While {$|\Gamma|>0$}
            		\State $\tau \gets (V=\varnothing,E=\varnothing)$
            		\State $x_i \gets $\Call{RandomSelect}{$\Gamma$}
            		\While {Ture}
            		\State $\delta_i \gets \{x_j | \arg\min\limits_{1 \le j \le n, j \ne i}  dist(x_i,x_j)$ \}
            		\State $E \gets E \cup (x_i,\delta_i)$
            		\If{$\delta_i \in V$ {\bf or} $\delta_i \notin \Gamma$}  
            			\State $\Gamma \gets \Gamma - V$
            			\State {\bf break}
            		\EndIf
					\State $V \gets V \cup \delta_i$
            		\State $x_i \gets \delta_i$
            		\EndWhile
            	\State $\mathcal{C} \gets \mathcal{C} \cup \tau$
            	\EndWhile       
            	\State \Return {$\mathcal{C}$}
            
         	\end{algorithmic}
	
\end{algorithm} 

Fig. 1 illustrates the procedure of sub-MSTs construction for 10 two-dimensional data points which is a selected part of the entire data. Fig.1 (a) shows the distribution of the original data points and 10 selected ones in a dashed box; Fig.1~(b) shows a heatmap demonstrating the distances between the 10 points; Fig.1~(c)-(g) shows the construction process of the sub-MSTs, in which an initial chain-style sub-MST $\tau_1$ (with the nodes {\bf 2}, {\bf 3}, {\bf 4} and {\bf 5}) is detected in Fig.1~(c), and the other sub-MSTs \{(1,2)\}, \{(7,8),(8,4)\}, \{(10,8)\} are detected in Fig.1~(e), (f), (g), respectively. Meanwhile, a small sub-MST $\tau_2$ that only has two nodes {\bf 6}, {\bf 9} is detected in Fig.1~(d).   

We can easily deduce that a pair of RNNs is actually the area with the highest density in a cluster. Thereby, we treat the RNNs as the core of the cluster and, in the following sections, we will focus on how to discriminate the most representative node, i.e., the root, from the two nodes of the RNNs.

\begin{figure*}
\label{fig1}
\centering
\includegraphics[width=\textwidth]{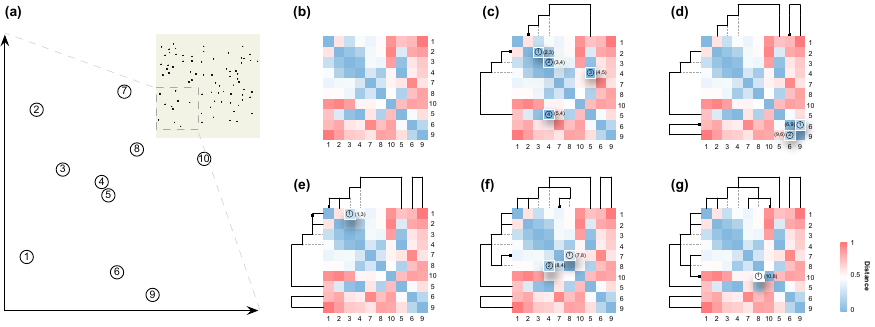}  
%\label{fig_first_case}}
%\hfil
%\centering
%\subfloat[Case I]{\includegraphics[width=2.5in]{fig1.pdf}  
%\label{fig_first_case}}
\caption{{\bf Illustration of the procedure to construct sub-MSTs.} (a) The distribution of the data points. (b) The distances heatmap of the 10 nodes. (c) Node {\bf 2} is randomly selected to start searching. Then, an initial chain-style sub-MST $\tau_1$ with the searching path \{(2,3), (3,4), (4,5), (5,4)\} could be derived, in which nodes {\bf 4} and {\bf 5} is a pair of RNNs. (d) Node {\bf 6} is randomly selected from the remainder \{1,6,7,8,9,10\}, and forms a small sub-MST $\tau_2$ with only two nodes {\bf 6} and {\bf 9}. (e) Node {\bf 1} is randomly selected from the updated candidate set \{1,7,8,10\}, and is linked to the existing sub-MST $\tau_1$ via the edge (1,3). (f) Node {\bf 7} is randomly selected from the remainder \{7,8,10\}, and forms a branch \{(7,8),(8,4)\} of the existing sub-MST $\tau_1$. (g) Node {\bf 10} is finally linked to the sub-MST $\tau_1$ via the edge (10,8).}

\end{figure*}

% An example of a double column floating figure using two subfigures.
% (The subfig.sty package must be loaded for this to work.)
% The subfigure \label commands are set within each subfloat command,
% and the \label for the overall figure must come after \caption.
% \hfil is used as a separator to get equal spacing.
% Watch out that the combined width of all the subfigures on a 
% line do not exceed the text width or a line break will occur.
%
%\begin{figure*}[!t]
%\centering
%\subfloat[Case I]{\includegraphics[width=2.5in]{box}%
%\label{fig_first_case}}
%\hfil
%\subfloat[Case II]{\includegraphics[width=2.5in]{box}%
%\label{fig_second_case}}
%\caption{Simulation results for the network.}
%\label{fig_sim}
%\end{figure*}
%
% Note that often IEEE papers with subfigures do not employ subfigure
% captions (using the optional argument to \subfloat[]), but instead will
% reference/describe all of them (a), (b), etc., within the main caption.
% Be aware that for subfig.sty to generate the (a), (b), etc., subfigure
% labels, the optional argument to \subfloat must be present. If a
% subcaption is not desired, just leave its contents blank,
% e.g., \subfloat[].

\subsection{Root detection in a cluster}
First of all, we need to explain why determining the root from a pair of RNNs is not trivial. According to our experiments and related literature, some simple options are available yet neither effective nor efficient. For example, randomly selecting one of the two nodes as the root might result in the elongated cluster being detected as shown in Fig. 2(d). On the contrary, the detected clusters shown in Fig. 2(c) are much more desired if we could select the root via a better tactic. Another simple try is to calculate the middle point between the two nodes as the root. But, this attempt will lead to an unbalanced cluster tree, and some additional prunings are required according to literature \cite{Xie 2020}. 

\begin{figure}[htb]
	\label{chain}
	\centering
	\includegraphics[width=3.5in]{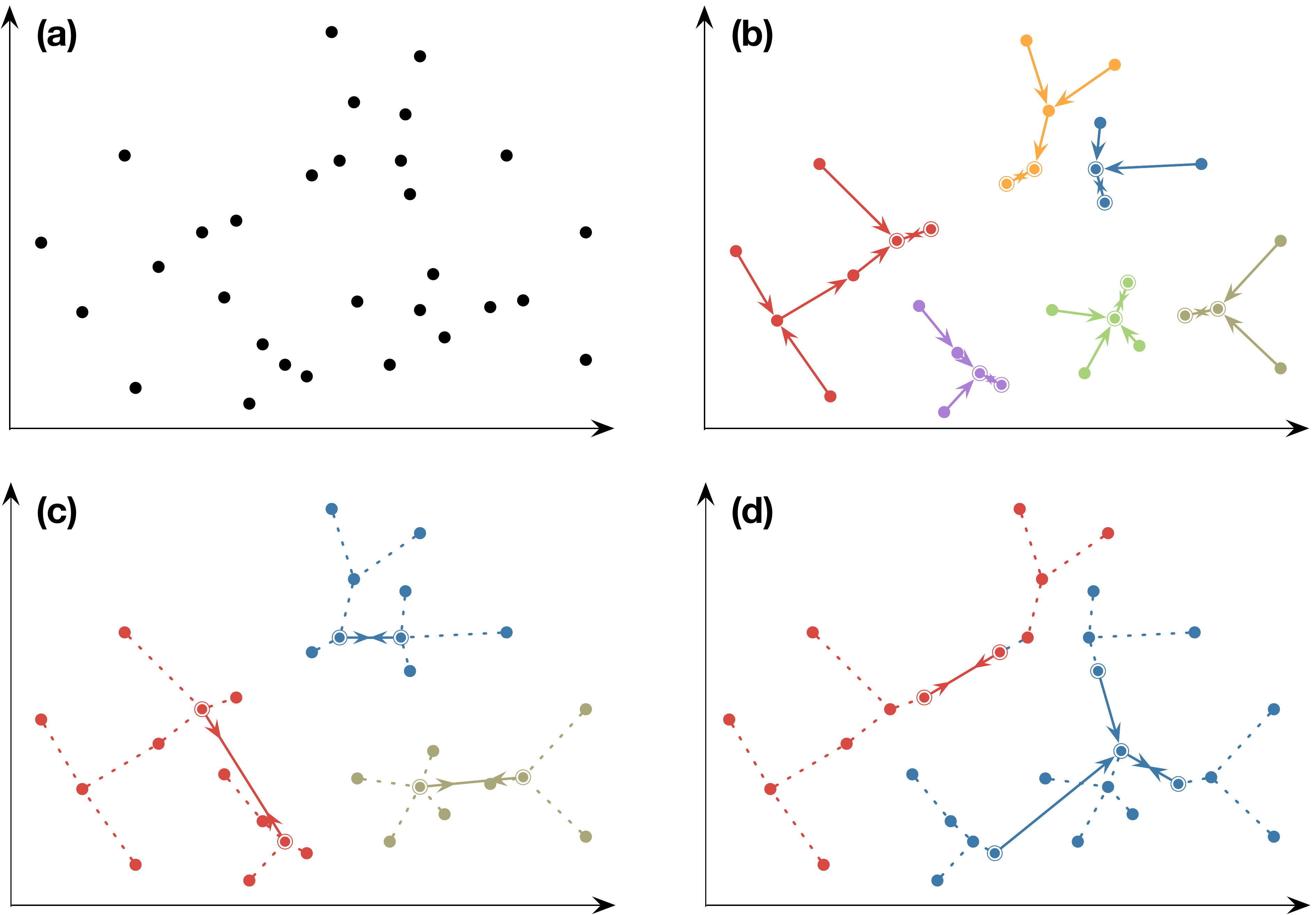}  
	\caption{{\bf Impacts of root selections.} (a) The distribution of data point. (b) The result of the first round of sub-MSTs construction. (c) The result of the second round of sub-MSTs construction based on the RNNs nodes' scores to select roots. (d) The result of the second round of sub-MSTs construction based on selecting roots randomly.}
\end{figure}

With respect to the generated graph partition $\mathcal{C} = (V,E)$ from Algorithm 1, we can use an adjacency matrix $\bf A$ to describe the construction of sub-MSTs more clearly, in which
\begin{equation}
a_{ij}= \left \{
\begin{aligned}
0 & , (x_i, x_j) \notin E \\
1 & , (x_i, x_j) \in E 
\end{aligned}
\right ..
\end{equation}
Furthermore, by introducing a relationship matrix $\bf R=A+A^T$, the RNNs in the all sub-MSTs could be easily detected as,
\begin{equation}
\Psi({\bf R}) = \{(x_i, x_j)|r_{ij}=2\},
\end{equation}
where $r_{ij}$ is an element in $\bf R$. Under the hypothesis that the reciprocal nearest neighbors are two possible representative data points, the next task is to determine which one in the RNNs is the most representative node, a.k.a. root. We put forward four measures to evaluate the RNNs data points, including: (1) degree-based index; (2) average neighbors' degree-based index; (3) path-based centrality index; (4) distance-based centrality index. 

Considering that a node with more neighbors usually gains more attention, we firstly use node degree to evaluate the RNNs nodes. The degree of the node $x_i$ is calculated by 
\begin{equation}
	d_i=\sum_{1\le j\le n} r_{ij},
\end{equation}
where the $r_{ij}$ is the element in relation matrix {\bf R}. From a perspective of topology structure, nodes with larger degree in the cluster obviously are more important than those with smaller degree. 

Next, we consider that a node is more important if its neighbors have a larger degree, which means this node is located at the center of the dense area of a graph. Thus, we use the neighbors' average degree of a node to measure its representativeness by
\begin{equation}
	\bar{d}_i=\frac{1}{d_i}\sum\limits_{r_{ij}=1,1\le u\le n}r_{ju}.
\end{equation}
This measure quantifies the averaged importance of neighbors associated with current node $x_i$. 

In addition, to further measure the centrality of the RNNs, we also propose  two other indices including path-based centrality index and distance-based centrality index. The definition of path-based centrality index is given by
\begin{equation}
	c_i=\frac{1}{|\tau|}\sum_{x_j \in \tau} s_{ij}.
\end{equation}
where $|\tau|$ is the number of nodes in the cluster $\tau$; $s_{ij}$ is the shortest path length between nodes $x_i$ and $x_j$. A smaller $c_i$ suggests that the shape of the cluster $\tau$ started from the node $x_i$ is more like a star. On the other hand, the distance-based centrality index is a simply improved version of the path-based centrality index, which is defined as,
\begin{equation}
	c^*_i=\frac{1}{|\tau|}\sum_{x_j \in \tau} \frac{\mathfrak{d}_{ij}}{s_{ij}},
\end{equation}
where $\mathfrak{d}_{ij}$ is the distance between nodes $x_i$ and $x_j$. 

More comprehensively, we come up with a hybrid index $\psi$ that integrates all of the above indices. Given a pair of RNNs $x_i$ and $x_j$, the combined score is calculated by, 

\setlength{\arraycolsep}{0.0em}
\footnotesize
\begin{equation}
	\psi_i=\frac{1}{4}\left(\frac{d_i}{d_i+d_j}+\frac{\bar{d}_i}{\bar{d}_i+\bar{d}_j}+(1-\frac{c_i}{c_i+c_j})+(1-\frac{c^*_i}{c^*_i+c^*_j})\right),
\end{equation}
\normalsize
obviously, $\psi_i + \psi_j = 1$. In addition, we also provide a simplified version of the hybrid index that only considers average neighbor's degree-based index and distance-based centrality, which is calculated by,
\begin{equation}
	\psi_i^*=\frac{1}{2}\left(\frac{\bar{d}_i}{\bar{d}_i+\bar{d}_j}+(1-\frac{c^*_i}{c^*_i+c^*_j})\right).
\end{equation}
If condition $\psi_i > \psi_j$ or $\psi_i^* > \psi_j^*$ meets, compared to $x_j$, $x_i$ can be decided as the root in the sub-MST. Thereby, by calculating Eq. (8) or (9),  we can easily determine the root set $\Lambda = \{r_i\}_1^m$ in all $m$ sub-MSTs, each of which is nested in a tree-style structure. In a higher level of clustering, these roots could be regarded as new nodes to form new sub-MSTs, i.e., putting all the root nodes in $\Lambda$ as a new candidate set of nodes, and then returning to the searching process to link the roots and construct tree-style sub-MSTs. As a result, such iterative process will reduce the number of sub-MSTs level-by-level and generate the final cluster tree. Of course, the iteration can also cease when a given granularity of clustering result reaches. In addition, if we want to generate a fixed number of clusters, i.e., $K$, we might not get the clusters with the exact number of $K$ due to the hierarchical structure of clustering. In this case, if having recorded the number of roots at last iteration as $k^+$, we calculate the $k^-$ roots at current iteration via the obtained relationship matrix $\bf R$.

\begin{equation}
 k^-= \frac{|\Lambda|}{2} = \frac{1}{2}\sum_{i,j} \lfloor\frac{1}{2}r_{ij}\rfloor .	
\end{equation}
If the condition $k^+>K>k^-$ holds, we treat $k^+$ as the final number of clusters. On the other hand, if we want to obtain the expected number of clusters, we can also add a simple step by linking the $k^+-K$ closest roots to merge clusters, and then we will get the $K$ final clusters as desired. The pseudocode of this iteration process and the scoring process are described in Algorithm 2 and Algorithm 3, respectively, in which the boundary-based score $\zeta$ will be introduced in the next section. 

\begin{algorithm}
\caption{\textsc{Iteration}}
	\begin{algorithmic}[1]
     	\Require Data: $X=\{x_i\}^n$, \#Cluster: $K=1$
    	\Ensure $Labels$  

                \State $\Lambda \gets X$  
                \State $\mathcal{C} \gets \varnothing$
                \While {$|\Lambda|>K$} 
%                	\State $nns \gets$ \Call{GetNearestNeighbors}{$roots$}
                	\State $\mathcal{C}^* \gets$ \Call{subMSTsCons}{$\Lambda$}
                	\State $\mathcal{C} \gets \mathcal{C} \cup \mathcal{C}^*$
                	\State ${\bf A} \gets$ \Call{Translate}{$\mathcal{C}^*$}
                	\State ${\bf R = A+A^T}$
                	\State $\Lambda \gets$  \Call{Scoring}{$\Psi({\bf R}), \bf{R}$}  
                \EndWhile  
                \State $Labels \gets$ \Call{Labeling}{$\Lambda, \mathcal{C}$}
           	\end{algorithmic}
	
\end{algorithm}   

\begin{algorithm}
\caption{\textsc{Scoring}}
	\begin{algorithmic}[1]
     	\Require Data: $\Psi({\bf R}), {\bf R}$
    	\Ensure $\Lambda$  

                \State $\Lambda \gets \varnothing$  
                \For {$(x_i,x_j)$ {\bf in} $\Psi(\bf R)$}
                	\If {$\psi_i \neq \psi_j$}
                		\State $v_i=\psi_i$; $v_j=\psi_j$
                		
                	\Else
                		\State $v_i=\zeta_i$; $v_j=\zeta_j$
                	\EndIf 
                	\State $\Lambda \gets \Lambda + \arg\max (v_i, v_j)$
                \EndFor

           	\end{algorithmic}
	
\end{algorithm}   

Figure 3 illustrates the self-similarity construction procedure of the cluster tree. As in the amplifier shown at the upper-left corner of Fig. 3, we get the roots of the exampled sub-MSTs shown in Fig. 1. For the sake of clarity, the sub-MSTs obtained in the first iteration are folded and represented by their roots. After three iterations, a complete cluster tree for the full data is achieved eventually.

\begin{figure}
\centering
  \includegraphics[width=3.5in]{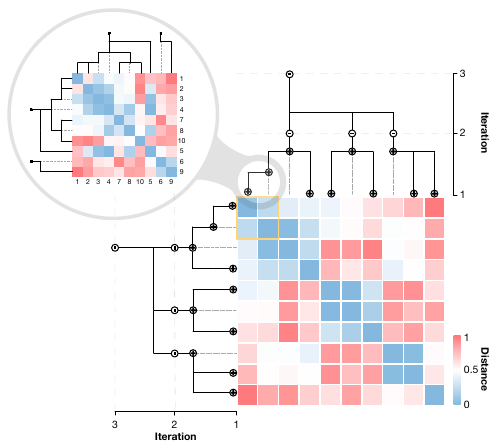}  
\caption{{\bf Illustration of the iteration and the further construction procedure.} The amplifier shows the unfolded sub-MSTs we get in Fig. 1, in which two roots are confirmed based on their scores. Folding the 10 sub-MSTs that are obtained in the first iteration, the roots are used as a new candidate set of nodes to construct sub-MSTs again. Finally, 3 roots are obtained and only one sub-MST is constructed. It means the full cluster tree is generated eventually.}
\end{figure}

\subsection{ Extended method for anomalous scoring on RNNs}
\begin{figure}
	\label{fig4}
	\centering
	\includegraphics[width=3in]{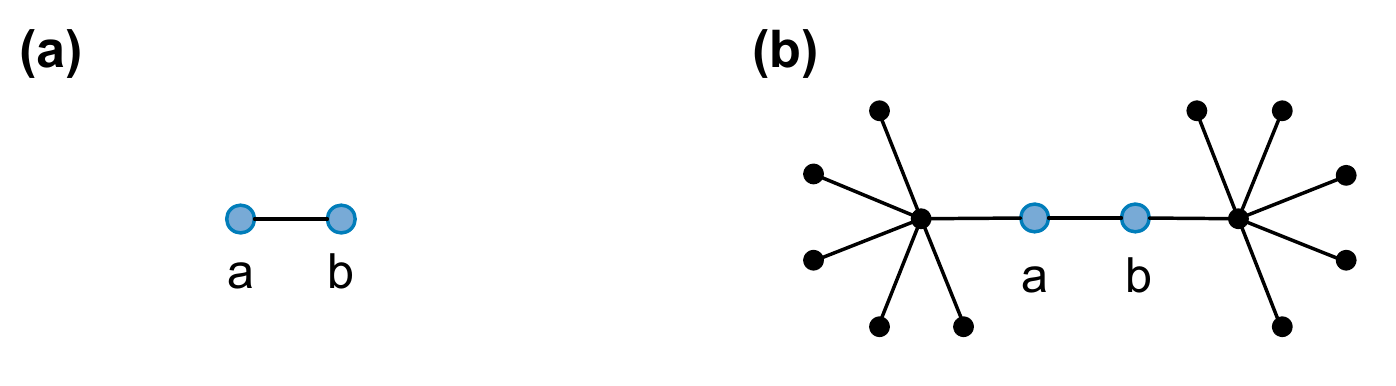}  
	\caption{{\bf Two cases  of anomalous scoring on RNNs data points.}}
\end{figure}

In most cases, we can simply use the measures proposed in the previous section to uniquely detect the root for a given pair of RNNs in a cluster in terms of their different scores. However, despite appearing very rare in practice, the two RNNs in a cluster are going to be scored equally in two special cases. As shown in Fig.~4(a), if nodes $a$ and $b$ are mutual neighbors and have no other neighbors existed, they apparently will get the same scores. The second case is that, when owning the identical neighborhood shown in Fig.~4(b), they would also get same scores. We call them two cases of anomalous scoring on RNNs. Thus, we propose an extended detection method to address this issue, which is on the basis of the following assumption: 

In a pair of RNNs points, the one who is closer to the data boundary would be more important and thus could be chosen as the root, in which, the data boundary here is defined as below.

\begin{myDef}
	Data Boundary. The data boundary is a closed curve formed by connecting some finite $\sigma$ pairs of data points, in which each pair of them has the farthest distance.
\end{myDef}

Commonly, a data boundary could be detected through a procedure as follows: we firstly find out $\sigma$ pairs of points having the top-$\sigma$ farthest distances. Note that the $\sigma$ pairs data points are $2\sigma$ unique points. And then, we link the selected points by $k$-nearest-neighbor method (here $k=2$). This straightforward method, however, is very expensive for large data set due to the high time-consumption of ranking the pairwise distances. Therefore, we propose a sampling strategy to help us accelerate the process of data boundary detection, which is on the basis of the property Theorem 1.

\begin{theorem}
Given a dataset $X=\{x_i\}^n$ and a randomly selected data point $x_i$ therein, if the furthest point of $x_i$ is $\eta_i^{(1)}$, and the furthest point of $\eta_i^{(1)}$ is $\eta_i^{(2)}$,  $\eta_i^{(1)}$ and $\eta_i^{(2)}$ must be a pair of points on the data boundary.
\end{theorem}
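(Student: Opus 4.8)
The plan is to reduce the claim to a single convexity fact about farthest points. First I would establish the following lemma: for any fixed point $p\in X$, the point $\arg\max_{q\in X}\, dist(p,q)$ is an extreme point of $X$, i.e.\ a vertex of the convex hull $\mathrm{conv}(X)$. Granting this lemma, the theorem is immediate. Applying it with $p=x_i$ shows that $\eta_i^{(1)}$, the farthest point from $x_i$, is a vertex of $\mathrm{conv}(X)$; applying it once more with $p=\eta_i^{(1)}$ shows that $\eta_i^{(2)}$, the farthest point from $\eta_i^{(1)}$, is likewise a vertex of $\mathrm{conv}(X)$. Since the data boundary of Definition~3 is exactly the outline traced by the outermost points of the cloud, both $\eta_i^{(1)}$ and $\eta_i^{(2)}$ then lie on it, and note that this holds regardless of whether the arbitrary seed $x_i$ is itself extreme.

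To prove the lemma I would use the convexity of the Euclidean distance $q\mapsto dist(p,q)$. Suppose, for contradiction, that the maximizer $\eta$ is not a vertex of $\mathrm{conv}(X)$. Then $\eta$ can be written as a convex combination $\eta=\sum_k \lambda_k q_k$ of vertices $q_k\in X\setminus\{\eta\}$ with $\lambda_k>0$ and $\sum_k\lambda_k=1$. The triangle inequality then yields the chain $dist(p,\eta)=\left\|\sum_k\lambda_k(q_k-p)\right\|\le\sum_k\lambda_k\,dist(p,q_k)\le\max_k dist(p,q_k)\le dist(p,\eta)$, where the last step holds because $\eta$ is the farthest point from $p$. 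Every inequality is therefore an equality, so some $q_k\ne\eta$ attains the same maximal distance to $p$. This contradicts the uniqueness of the farthest neighbor, which is guaranteed by the infinitesimal tie-breaking perturbation $\varepsilon$ introduced in Section~4.1. Hence $\eta$ must be a vertex, establishing the lemma.

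Finally I would connect the extreme-point conclusion to the data boundary as the authors define it operationally, and this is the step I expect to be the main obstacle. By the lemma, each vertex $v$ of $\mathrm{conv}(X)$ has a farthest partner that is again a vertex, and the distance $dist(\eta_i^{(1)},\eta_i^{(2)})$ realized along the construction is maximal among all distances measured from $\eta_i^{(1)}$; such long-range pairs are exactly the ones harvested when the $\sigma$ pairs with the top-$\sigma$ farthest distances are collected. The difficulty is that Definition~3 describes the boundary only informally, as a ``closed curve'' through these $\sigma$ pairs, so the argument must make precise the assertion that being a convex-hull vertex is equivalent to being captured by the top-$\sigma$-farthest-pair sampling. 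I expect this to require either an explicit lower bound on $\sigma$ (large enough to exhaust all extreme-point pairs, so that every vertex appears as an endpoint of a selected pair) or a mild reformulation of Definition~3 that identifies the boundary directly with the set of extreme points; under either reading the lemma immediately places both $\eta_i^{(1)}$ and $\eta_i^{(2)}$ on the boundary and completes the proof.
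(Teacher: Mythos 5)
Your proof is correct in its core and takes a genuinely different route from the paper's. The paper argues by contradiction with a ray-extension argument: if $\eta_i^{(1)}$ were not on the boundary it would lie strictly inside the closed curve, so the line through $x_i$ and $\eta_i^{(1)}$ would meet the boundary at a point $\eta_i^{(1)*}$ beyond $\eta_i^{(1)}$, giving $dist(x_i,\eta_i^{(1)*})=dist(x_i,\eta_i^{(1)})+dist(\eta_i^{(1)},\eta_i^{(1)*})>dist(x_i,\eta_i^{(1)})$ and contradicting maximality; the same is repeated for $\eta_i^{(2)}$. You instead reduce the theorem to the standard convexity fact that a farthest point of a finite set from any fixed $p$ is an extreme point of $\mathrm{conv}(X)$, applied twice. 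Your route is mathematically cleaner: it does not require the boundary curve to enclose the data, and it does not need the auxiliary point $\eta_i^{(1)*}$ --- which lies on the curve but generically is \emph{not} a data point of $X$ --- to count as a competitor for ``farthest data point,'' an unstated assumption in the paper's argument. The price, which you candidly flag, is the final identification of convex-hull vertices with the operational boundary of Definition 3; but the paper's own proof silently makes the analogous identification (it reads ``on the data boundary'' as ``on the closed curve'' and assumes the curve encloses all points), so on this step the two proofs are equally informal. One small repair to your lemma: the appeal to the $\varepsilon$ perturbation is slightly misplaced, because the perturbed distances are no longer exactly Euclidean while your triangle-inequality chain uses exact distances; you do not actually need uniqueness. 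Equality in $\left\|\sum_k\lambda_k(q_k-p)\right\|\le\sum_k\lambda_k\,dist(p,q_k)$ forces all vectors $q_k-p$ to be nonnegative multiples of a single vector, and equality in the second step forces all $dist(p,q_k)$ to be equal; together these force all $q_k$ to coincide, contradicting that a non-vertex must be a convex combination of at least two distinct vertices. This closes the equality case purely geometrically and makes your lemma unconditional.
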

\begin{proof}
 If $\eta_i^{(1)}$ is not on the data boundary, then $\eta_i^{(1)}$ can only be within the data boundary. Meanwhile, there must be at least one another point on the data boundary, denoted as $\eta_i^{(1)*}$, satisfying $x_i$, $\eta_i^{(1)}$ and $\eta_i^{(1)*}$ on a straight line. There must be satisfying,
 \begin{equation}
	dist(x_i, \eta_i^{(1)})<dist(x_i, \eta_i^{(1)}) + dist(\eta_i^{(1)},\eta_i^{(1)*}), \nonumber
\end{equation}
where,
\begin{equation}
 dist(x_i, \eta_i^{(1)}) + dist(\eta_i^{(1)},\eta_i^{(1)*})=dist(x_i, \eta_i^{(1)*}).\nonumber	
\end{equation}
This contradicts that $\eta_i^{(1)}$ is the furthest point away from $x_i$; therefore, $\eta_i^{(1)}$ must be just on the data boundary. For the same reason, it can be proved that $\eta_i^{(2)}$ must also be on the data boundary. Theorem 1 is proved.
\end{proof}

As a result, to detect the data boundary efficiently, we can sample the pairwise points on the boundary by randomly selecting a set of nodes $\{x_i\}^n$ and, for each $x_i$, calculating its 1st-order and 2rd-order furthest points, $\eta_i^{(1)}$ and $\eta_i^{(2)}$, respectively. The detailed description of the processing is shown in Algorithm 4.

\begin{algorithm}
\caption{\textsc{BoundryDetection}}
	\begin{algorithmic}[1]
     	\Require Data: $X=\{x_i\}^n$, \#Boundary: $2\sigma$
    	\Ensure $\mathfrak{B}$  
    		\State $\mathfrak{B} \gets \varnothing$; $\mathfrak{B}^* \gets \varnothing$
    		\For {$x_i$ {\bf in} \Call{RandomSelecte}{$X, \sigma$}}
    			\State $\eta_i^{(1)} \gets \{x_j | \arg\max\limits_{\overset{1 \le j \le n,}{\overset{j \ne i,}{x_j \notin \mathfrak{B}^*}}}  dist(x_i,x_j)\}$ 
    			\State $\eta_i^{(2)} \gets \{x_j | \arg\max\limits_{\overset{1 \le j \le n,}{\overset{\eta_i^{(1)}  \ne x_j,}{x_j \notin \mathfrak{B}^*}}}  dist(\eta_i^{(1)},x_j)\}$ 
    			\State $\mathfrak{B} \gets \mathfrak{B} + \{(\eta_i^{(1)}, \eta_i^{(2)})\}$
    			\State $\mathfrak{B}^* \gets \mathfrak{B}^* + \eta_i^{(1)} + \eta_i^{(2)}$
    		\EndFor
                  \end{algorithmic}
	
\end{algorithm}

And then, by employing the decision Theorems 2 and 3, it makes sense to use the sampled $\sigma$ pairs of points to measure the closeness score between a data point $\mathfrak{r}$ and the data boundary as,
\begin{equation}
	\zeta = \frac{1}{\sigma}\sum_{i=0}^{\sigma} |dist(\mathfrak{r},\eta_i^{(1)})-dist(\mathfrak{r},\eta_i^{(2)})|,
\end{equation}
where, the $\eta_i^{(1)}$ and $\eta_i^{(2)}$ are $i^{th}$ pair of data points on the data boundary, Theorem 2 ensures the scores over the RNNs are different, and Theorem 3 ensures the correlation between the scores over RNNs and their distances to the data boundary.

\begin{theorem}
Three boundary points that are not on the same line can be used to score a pair of RNNs differently.
\end{theorem}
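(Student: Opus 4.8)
The plan is to reduce the statement to a single geometric fact about equidistant loci and then lift it to the score in Eq.~(12). Let $a$ and $b$ be the RNNs pair; because $\zeta$ is only invoked in the anomalous cases, we may assume $a\neq b$. The core fact is that a reference point $P$ satisfies $dist(a,P)=dist(b,P)$ if and only if $P$ lies on the perpendicular bisector of the segment $\overline{ab}$, which in $\mathbb{R}^2$ is a single straight line. So if the three boundary points were all equidistant from $a$ and $b$ they would all lie on this one line and hence be collinear, contradicting the hypothesis. Therefore at least one of them, say $P_k$, satisfies $dist(a,P_k)\neq dist(b,P_k)$, which already shows the three points carry information separating $a$ from $b$.

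I would then sharpen this per-point observation to the full distance profile via trilateration: when $P_1,P_2,P_3$ are not collinear, the map $\mathfrak{r}\mapsto(dist(\mathfrak{r},P_1),dist(\mathfrak{r},P_2),dist(\mathfrak{r},P_3))$ is injective on the plane, so distinct $a$ and $b$ necessarily produce distinct distance vectors. This is the clean backbone of the argument, and it is exactly the property that fails for collinear points, where reflecting across the common bisector leaves all three distances fixed. Geometrically, non-collinearity is what breaks the mirror symmetry responsible for the ties sketched in Fig.~4.

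The main obstacle is the last mile: converting ``the distance profiles differ'' into ``the aggregated score differs.'' Since $\zeta$ sums absolute differences $|dist(\mathfrak{r},\eta_i^{(1)})-dist(\mathfrak{r},\eta_i^{(2)})|$ over pairs, two distinct profiles could in principle be re-summed to the same total, so a naive argument does not immediately close. I would handle this by reading the theorem as the capability claim it actually states — the three non-collinear points \emph{can be used} to score $a$ and $b$ differently — and realizing that capability through the single distinguishing coordinate $P_k$ (equivalently, a pair containing $P_k$) rather than demanding that every possible aggregation separate them; non-collinearity guarantees such a separating coordinate always exists, whereas collinearity is precisely the degenerate regime that can force an exact tie. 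Verifying that the particular weighting in Eq.~(12) inherits this separation, and that no accidental cancellation survives when the boundary points sit at mutually extremal distances (as produced by Algorithm~4 and Theorem~1), is the delicate step I expect to require the most care.
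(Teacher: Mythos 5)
Your geometric core is sound and takes a genuinely different route from the paper. The paper fixes one candidate root $\mathfrak{r}$, forms the three pairwise differences $|\mathfrak{d}_i-\mathfrak{d}_j|$ of its distances to the boundary points, argues that non-collinearity forces one of these to be nonzero (hence $\zeta\neq 0$), and then simply asserts that $\mathfrak{r}\neq\mathfrak{r}'$ implies $\zeta\neq\zeta'$. Your decomposition is orthogonal: you fix a boundary point and compare its distances to the two RNN points, using the fact that the equidistant locus of $a\neq b$ is the perpendicular bisector, so three non-collinear points cannot all tie; trilateration then gives distinct distance profiles. In the plane both of your facts are correct, and they are in some ways sturdier than the paper's own steps: the paper's nonvanishing claim fails when $\mathfrak{r}$ is the circumcenter of the triangle $\mathfrak{b}_1\mathfrak{b}_2\mathfrak{b}_3$ (non-collinearity is exactly what makes a circumcenter exist), and its final inference ``distinct points give distinct $\zeta$'' is a non sequitur --- rotating a point by $120^\circ$ about the center of an equilateral boundary triangle permutes the three distances and leaves $\zeta$ unchanged, so the theorem is only true under the capability reading you adopt.

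There is, however, a genuine gap at your last step, and your proposed repair does not work as stated. From $dist(a,P_k)\neq dist(b,P_k)$ you cannot pass to ``a pair containing $P_k$'' separating the scores: with $P_k=(0,0)$, $P_j=(1,0)$, $a=(2,0)$, $b=(3,0)$ one has $dist(a,P_k)=2\neq 3=dist(b,P_k)$, yet $|dist(a,P_k)-dist(a,P_j)|=1=|dist(b,P_k)-dist(b,P_j)|$. The separating coordinate survives only if you score by the raw distance to $P_k$, which is not the pairwise-difference form of $\zeta$ in Eq.~(11); so the step you yourself flag as delicate is precisely where your argument stops short --- and it is the same place where the paper's proof substitutes a bare assertion for an argument, so you have not missed something the paper actually supplies. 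A smaller caveat: both your bisector-is-a-line fact and trilateration are two-dimensional; in $\mathbb{R}^m$ with $m\ge 3$ the equidistant locus of $a$ and $b$ is a hyperplane, which contains many non-collinear triples, so even the existence of a separating $P_k$ needs the additional assumption that the three boundary points do not all lie in that hyperplane.
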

\begin{proof}
Suppose that there is a pair of RNNs, denoted as $(\mathfrak{r}, \mathfrak{r}')$, and three random boundary points $\{\mathfrak{b}_1, \mathfrak{b}_2, \mathfrak{b}_3\}$ which are not on a same line (there are three pairwise combinations, $(\mathfrak{b}_1, \mathfrak{b}_2)$, $(\mathfrak{b}_1, \mathfrak{b}_3)$, $(\mathfrak{b}_2, \mathfrak{b}_3)$). Firstly, calculate the distances between $\mathfrak{r}$ and the three data points $(\mathfrak{b}_1, \mathfrak{b}_2, \mathfrak{b}_3)$, noted as $\mathfrak{d}_1$, $\mathfrak{d}_2$ and $\mathfrak{d}_3$, respectively, and compute,
\begin{equation}
\zeta=\frac{1}{{\bf C}(3,2)}\sum_{1\le j<i\le 3}|\mathfrak{d}_i-\mathfrak{d}_j|,\nonumber
\end{equation}
as the score of $\mathfrak{r}$. Since $\{\mathfrak{b}_1, \mathfrak{b}_2, \mathfrak{b}_3\}$ is not on a same line, at least one of the terms $|\mathfrak{d}_{i}-\mathfrak{d}_{j}|$ must not be 0, therefore the score $\zeta$ does not equal 0. Likewise, for $\mathfrak{r}^*$, a score $\zeta'$ can be calculated which is also not equal to 0. Since $\mathfrak{r}$ and $\mathfrak{r}'$ are different data points, $\zeta \neq \zeta'$ must be satisfied, and Theorem 2 is proved.
\end{proof}

\begin{theorem}
Given a pair of RNNs $(\mathfrak{r}, \mathfrak{r}')$ and their boundary-based score $(\zeta,\zeta')$. If $\zeta>\zeta'$, it suggests that point $\mathfrak{r}$ is more nearer to the boundary than $\mathfrak{r}'$.
\end{theorem}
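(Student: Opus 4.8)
The plan is to read the statement as a \emph{monotonicity} claim: I will show that the boundary-based score $\zeta$ in Eq.~(11) is an increasing function of how close the scored point is to the data boundary, so that the comparison $\zeta>\zeta'$ transfers directly into a comparison of boundary-closeness between $\mathfrak{r}$ and $\mathfrak{r}'$. The single tool driving everything is the reverse triangle inequality: for the $i$-th sampled boundary pair, writing $d_{i,1}=dist(\mathfrak{r},\eta_i^{(1)})$, $d_{i,2}=dist(\mathfrak{r},\eta_i^{(2)})$ and $L_i=dist(\eta_i^{(1)},\eta_i^{(2)})$, we always have $|d_{i,1}-d_{i,2}|\le L_i$, with the bound approached precisely when $\mathfrak{r}$ sits near one of the two boundary endpoints (collinear with the pair and outside the segment). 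This already says each summand of $\zeta$ is largest for points hugging the boundary and smallest for points lying on the perpendicular bisector of the pair.

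To turn this qualitative remark into a usable quantity, first I would make the dependence explicit using the Euclidean structure that Section~4.1 already restricts us to. Let $m_i$ be the midpoint of $(\eta_i^{(1)},\eta_i^{(2)})$ and $u_i$ the unit vector from $\eta_i^{(1)}$ to $\eta_i^{(2)}$, and let $p_i=(\mathfrak{r}-m_i)\cdot u_i$ be the signed displacement of $\mathfrak{r}$ along the axis of the pair. A one-line expansion of the squared distances gives
\[
 d_{i,1}^{2}-d_{i,2}^{2}=2L_i\,p_i,\qquad\text{hence}\qquad |d_{i,1}-d_{i,2}|=\frac{2L_i\,|p_i|}{d_{i,1}+d_{i,2}}.
\]
Each term of $\zeta$ is therefore an increasing function of $|p_i|$, i.e.\ of how far $\mathfrak{r}$ is pushed from the bisecting hyperplane toward the boundary along that axis, damped only mildly by the denominator $d_{i,1}+d_{i,2}$. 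Summing over the $\sigma$ pairs that (by Theorem~1) lie on the boundary and span many directions, I would argue that a point lying closer to the boundary is off-center for more of the sampled axes, accumulates larger $|p_i|$ values, and hence earns a larger $\zeta$. Taking the contrapositive of this monotone relation yields the stated implication: $\zeta>\zeta'$ forces $\mathfrak{r}$ to be nearer the boundary than $\mathfrak{r}'$.

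The hard part will be the aggregation step and, relatedly, pinning down what ``nearer to the boundary'' means. A single pair only controls the projection $p_i$ along one direction, so a point can be near the boundary along $u_i$ yet central along $u_j$; the per-pair identity is exact, but concluding that the \emph{average} $\zeta$ is monotone in a global boundary-distance requires a regularity assumption on the point cloud and on the sampled boundary (roughly, that the boundary is sampled densely and fairly in all directions, and that the region is convex enough that ``distance to the boundary'' is well posed). I would either state this hypothesis explicitly and prove monotonicity of the expected summand under it, or retreat to the weaker but fully rigorous claim that for each fixed pair the summand is monotone in $|p_i|$, reading $\zeta$ as a Monte-Carlo estimate of the resulting average, which is the spirit in which the theorem is actually used. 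The denominator $d_{i,1}+d_{i,2}$ is the only genuinely quantitative nuisance, and I would control it by noting it stays in a bounded range (between $L_i$ and roughly twice the data diameter), so it cannot reverse the order induced by the dominant factor $|p_i|$.
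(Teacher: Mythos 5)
Your proposal is sound where it is rigorous, and it takes a genuinely different route from the paper. The paper's own proof is a qualitative limiting argument on a three-point variant of the score: it lets $\mathfrak{r}$ travel toward one sampled boundary point $\mathfrak{b}_1$, asserts that the two terms involving $\mathfrak{d}_1$ grow while \emph{assuming} the third term $|\mathfrak{d}_3-\mathfrak{d}_2|$ stays approximately constant, concludes that $\zeta$ attains a maximum when $\mathfrak{r}$ coincides with a boundary point, and from this infers the ordering claim. You instead work directly with the score of Eq.~(11), replace the approximation by the exact identity $|d_{i,1}-d_{i,2}| = 2L_i|p_i|/(d_{i,1}+d_{i,2})$, and obtain per-pair monotonicity in the axial displacement $|p_i|$ in any dimension. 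This is strictly more informative than the paper's limit computation: it quantifies \emph{how} the summand grows as the point leaves the bisecting hyperplane of a pair, and it makes explicit the damping denominator (which the paper never mentions), controlled via $L_i \le d_{i,1}+d_{i,2} \le 2\,\mathrm{diam}(X)$.

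The gap you flag --- passing from per-pair, per-direction monotonicity to monotonicity in a global distance-to-boundary --- is real, but you should know the paper does not close it either: its ``approximately assume $|\mathfrak{d}_3-\mathfrak{d}_2|$ unchanged'' step, and the final leap from ``$\zeta$ is maximized at boundary points'' to ``$\zeta>\zeta'$ implies $\mathfrak{r}$ is nearer the boundary,'' are exactly where the same aggregation and well-posedness issues hide. The theorem's own wording (``it suggests that'') signals that the published argument is heuristic. Your two proposed repairs --- either an explicit sampling/convexity hypothesis under which the expected summand is monotone in boundary distance, or a weakened per-pair statement reading $\zeta$ as a Monte-Carlo average --- are both legitimate ways to make the claim honest, and either would strengthen the published proof rather than merely reproduce it.
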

\begin{proof}
Without loss of generality, if $\mathfrak{r}$ is approaching to one of the three boundary points $\{\mathfrak{b}_1,\mathfrak{b}_2,\mathfrak{b}_3\}$, such as $\mathfrak{b}_1$, the terms $\{|\mathfrak{d}_2-\mathfrak{d}_1|,|\mathfrak{d}_3-\mathfrak{d}_1|\}$ will be increasing and the term $|\mathfrak{d}_3-\mathfrak{d}_2|$ would not change much (here, we approximately assume that the term $|\mathfrak{d}_3-\mathfrak{d}_2|$ is unchanged). When $\mathfrak{r}$ arrives at the boundary point $\mathfrak{b}_1$, i.e., $\mathfrak{r}=\mathfrak{b}_1$ or $\mathfrak{d}_1=0$, then the score,
%\begin{equation}
%\zeta=\frac{1}{{\bf C}(3,2)}\left(dist(\mathfrak{b}_1,\mathfrak{b}_2)+dist(\mathfrak{b}_1,\mathfrak{b}_3)\right),\nonumber
%\end{equation}
\begin{equation}
\zeta=\frac{1}{{\bf C}(3,2)}\left(\mathfrak{d}_2+\mathfrak{d}_3+|\mathfrak{d}_3-\mathfrak{d}_2|\right),\nonumber
\end{equation}
also reaches one of the maximums of the scores over $\mathfrak{r}$. Therefore, a score $\zeta$ can measure how close $\mathfrak{r}$ is to the data boundary. If $\zeta>\zeta'$, $\mathfrak{r}$ will be more approaching to the boundary than $\mathfrak{r}'$. Theorem 3 is proved.
\end{proof}

\section{Experiments}

\subsection{Data sets}

 As an open accessed data repository, UCI database \cite{Lichman 2013} is widely used in the communities of machine learning and data mining. In this study, fifteen real-world data sets with trustworthy labels selected from the UCI database are applied to test the performance of the proposed algorithm. The basic information of these adopted data sets is summarized in Table I.

\begin{table}
\footnotesize
\caption{{\bf Basic information of the fifteen selected UCI data sets.}}
\begin{center}
\begin{tabular}{|c|ccc|}
\hline
\textbf{DataSet}&\textbf{\#Samples}&\textbf{\#Features}&\textbf{\#Class}\\ 
\hline
iris&150&4&3\\
sonar&208&60&2\\
glass&214&9&2\\
ecoli&336&7&8\\
ionosphere&351&60&2\\
synthetic control&600&60&6\\
vehivel&846&18&4\\
mfeat-fourier&2000&76&10\\
mfeat-karhunen&2000&64&10\\
mfeat-zerike&2000&47&10\\
segment&2310&19&7\\
wavefore-5000&5000&40&3\\
optdigits&5630&64&9\\
letter&20000&16&26\\
avilia&20867&34&12\\

\hline
\end{tabular}
\end{center}
\end{table}

\subsection{Evaluation Index}

In the experiments, we apply the well-known Rand Index \cite{Rand 1971} and Normalized Mutual Information (short for NMI) \cite{Cover 2005} to compare the results.

\subsubsection{Rand Index} Rand Index measures the similarity between two independent partitions over a dataset. Given a dataset with $n$ items $X=\{x_i\}^n$ and its two partitions of the dataset, i.e., the real partition $\mathcal{C}$ and the algorithm-produced partition $\mathcal{C}^*$, the Rand Index is defined as,
\begin{equation}
   RI=\frac{a+b}{{\bf C}(n,2)},
\end{equation}
where $a$ is the number of pairwise data points that are assigned to a same cluster in both $\mathcal{C}$ and $\mathcal{C}^*$; $b$ is the number of pairwise data points that are assigned to different clusters in both $\mathcal{C}$ and $\mathcal{C}^*$; ${\bf C(\cdot)}$ is the combination formula.

\subsubsection{Normalized Mutual Information} NMI measures how much information about a variable is contained in the target variable. Given two random variables, $X$ and $Y$, and the joint distribution $p(x,y)$, marginal distribution $p(x)$ and $p(y)$, respectively. The Mutual Information, $MI(X;Y)$, is defined as, 
\begin{equation}
   MI(X;Y)=\sum_{x\in X}\sum_{y\in Y}p(x,y)\log \frac{p(x,y)}{p(x)p(y)}.
\end{equation}

However, the value of Mutual Information is week in showing the advantage of algorithms intuitively. Hence, Mutual Information is usually assigned to $[0,1]$, called Normalized Mutual Information that is calculated as below,

\begin{equation}
   NMI(X;Y)=\frac{2MI(X;Y)}{H(X)+H(Y)},
\end{equation}
where $H(\cdot)$ is the information entropy, $H(X)=-\sum_{x\in X}p(x_i)\log p(x_i)$, and $H(Y)$ in the same way.

\subsection{Experimental results}
\subsubsection{Comparison of different indices for scoring RNNs} We first compare the performances of the scoring measures including four independent indices and two hybrid indices. Here, without loss of generality, we just pick out six data sets from the fifteen ones, which are from diversified domains.  As illustrated in Fig. 5, the performances are shown by using the box diagram which contains the upper-bound, lower-bound, mean, one-quarter and three-quarter. Generally, the degree-based index (noted as $d$) and the path-based centrality index (noted as $c$) is less competitive than average neighbor's degree-based index (noted as $\bar{d}$) and distance-based centrality index (noted as $c^*$). More clearly, the detailed results are also presented in Table II. Both the lowerbound and mean of degree-based index and path-based centrality index are significantly lower than the two other indices. Besides, the comparison between the hybrid index (comprising four independent indices, noted as $\psi$) and the simplified hybrid index (comprising only two indices, i.e., $\bar{d}$ and $c^*$, noted as $\psi^*$) shows that the simplified one gets higher lowerbound and larger means,   which means that the simplified hybrid index gets better performance and requires less computation. Hence,  in the following experiments, we prefer to use the simplified hybrid index to evaluate the RNNs data points.

\begin{figure}
\centering
  \includegraphics[width=0.8\textwidth]{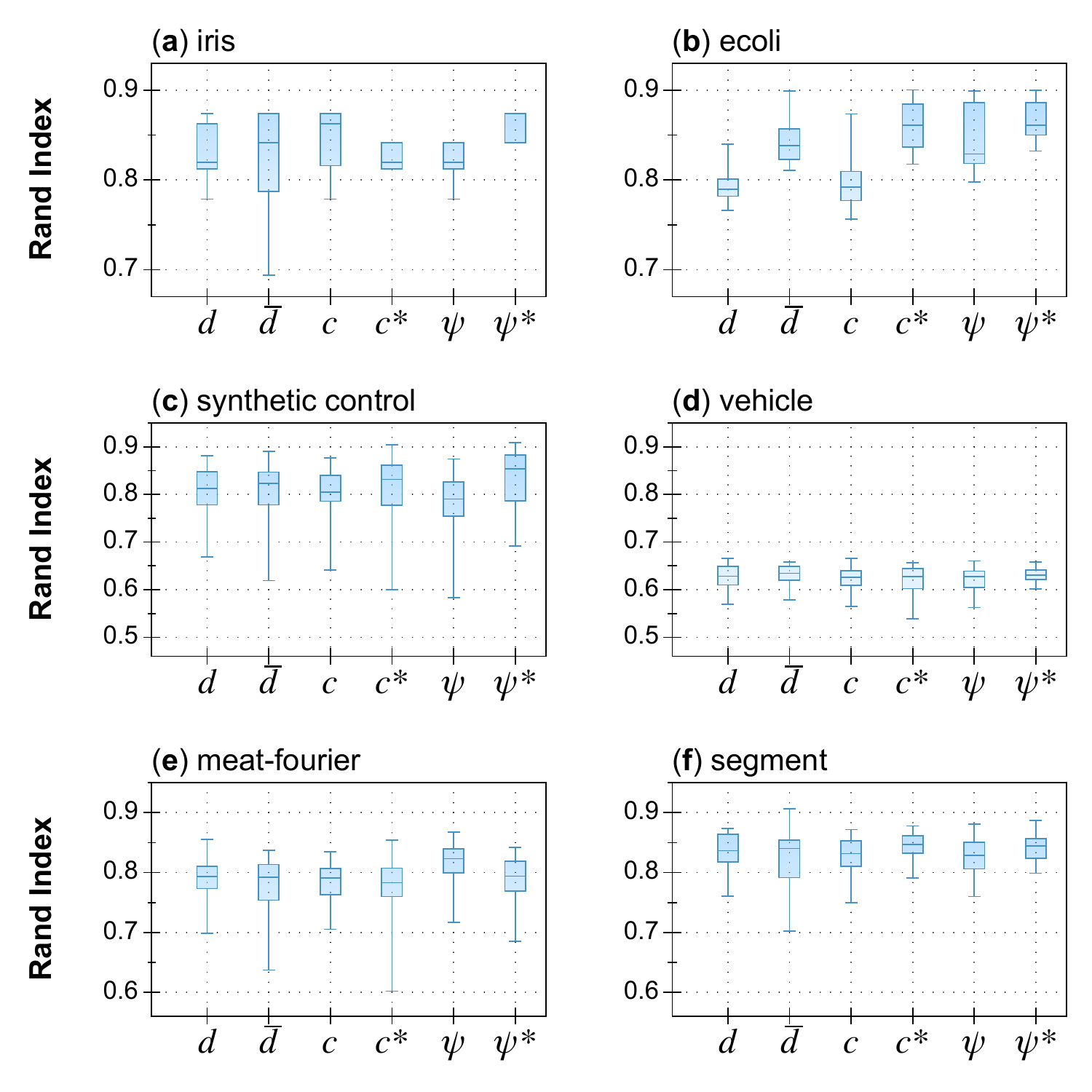}
\caption{{\bf The comparison of different indices for anomalous scoring on RNNs on six UCI data sets.} The X-axis shows the box diagram of 100 results corresponding to the different indices.}
\end{figure}

\begin{table*}
	\caption{The lower-bound and mean of different indices on six UCI benchmarks.}
	\small
	\begin{center}
	\begin{tabular}{|c|c|cccccc|}
	\hline
	&\multirow{2}{*}{\bf Datasets}&\multirow{2}{*}{\bf $\bar{d}$}&\multirow{2}{*}{\bf $c$}&\multirow{2}{*}{\bf $c^*$}&\multirow{2}{*}{\bf $d$}&\multirow{2}{*}{\bf $\psi$}&\multirow{2}{*}{\bf $\psi^*$}\\
	&&&&&&&\\
	\hline
	\multirow{7}{*}{\rotatebox{90}{\bf MEAN}}&iris&0.8202&{\bf 0.8481}&0.8242&0.8339&0.8231&0.8426\\
	&ecoli  &0.8425	&0.7949	&0.8606	&0.7922	&0.8495	&{\bf 0.8653}\\
    &synthetic control 	&0.8076	&0.7998	&0.8183	&0.8067	&0.7853	&{\bf0.8349}\\
    &vehicle	&{\bf0.6284}	&0.6229	&0.6181	&0.6247	&0.6218	&0.6234\\
    &meat-fourier	&0.7768	&0.7843	&0.7777	&0.7899	&{\bf0.8182}	&0.7876\\
    &segment	&0.8213	&0.8237	&{\bf0.8383}	&{0.8363}	&0.8221	&0.8353\\\cline{2-8}
    &{\bf Means}	&0.7828	&0.7790	&0.7895	&0.7806	&0.7867	&{\bf 0.7982}\\
	\hline
	\multirow{7}{*}{\rotatebox{90}{\bf MIN}}&iris	&0.6937	&0.7785	&0.8121&0.7785	&0.7785	&{\bf0.8415}\\
    &ecoli	&0.8104	&0.7561	&0.8176	&0.7660	&0.7974	&{\bf0.8320}\\
    &synthetic control 	&0.6192	&0.6410	&0.6002	&0.6692	&0.5830	&{\bf0.6917}\\
    &vehicle	&0.5782	&0.5709	&0.5453	&0.5630	&0.5586	&{\bf 0.6022}\\
    &meat-fourier	&0.6370	&0.7051	&0.6019	&0.6984	&{\bf0.7165}	&0.6849\\
    &segment	&0.7024	&0.7441	&0.7931	&0.7604&0.7625	&{\bf 0.8042}\\\cline{2-8}
    &{\bf Means}	&0.6735	&0.6993	&0.6950	&0.7059 &0.6994	&{\bf 0.7427}\\

	\hline
	\end{tabular}
	\end{center}
\end{table*}

\subsubsection{Comparison between random strategy and the boundary-based strategy on anomalous scoring over RNNs} As a simple strategy mentioned in the previous section, randomly selecting a point from the RNNs is seemingly also able to handle the issue of anomalous scoring over RNNs. In order to verify the superiority of the proposed boundary-based strategy, we give some comparisons between the two strategies as following. Note that, to obtain a balanced performance, we  just sample $\log n$ pairs of boundary points by following Theorem 1. From the results of the boundary-based method shown in Fig. 6, we can observe that the difference between the values of upper bound and the lower bound for the metric Rand Index is significantly reduced, that is to say, the robustness of the boundary-based method is much better than the random strategy. Meanwhile, it also shows that both the mean value and the lower bound of Rand Index for the boundary-based method are higher than those for the random one in most datasets as shown in Table III. Therefore, it can be concluded that the boundary-based strategy is a great improvement over the random strategy.

\begin{figure}
\label{fig6}
\centering
  \includegraphics[width=\textwidth]{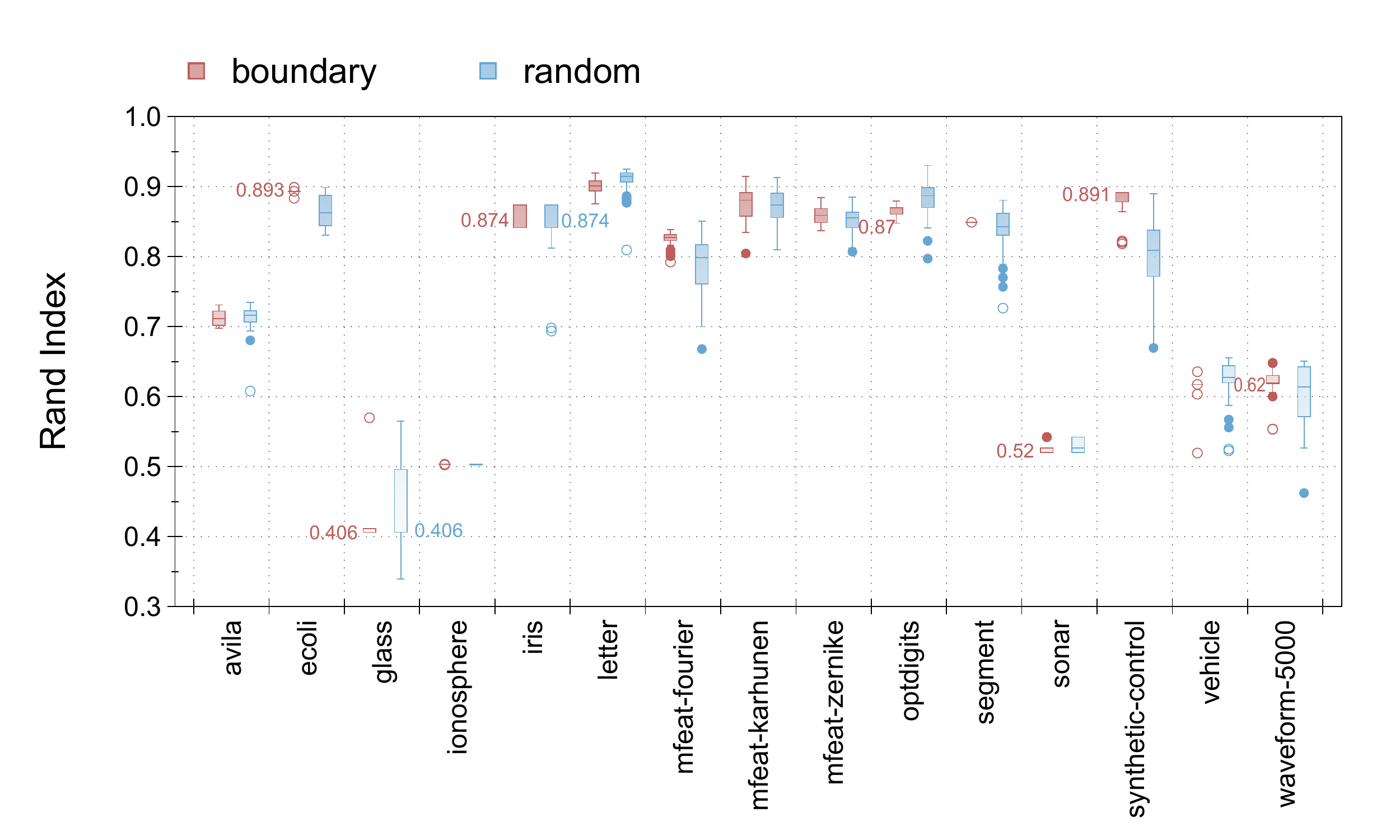}  
\caption{{\bf Comparison between random strategy and boundary-based strategy on fifteen UCI benchmarks.} The X-axis corresponds to different data sets and Y-axis shows the box diagram of 100 results (some of the boxes are marked with the mean values as their mean symbols are hard to be observed in the boxes). The circles are the noisy outcome that appears once or twice. }
\end{figure}

\begin{table}
	\caption{Comparison results of random strategy and boundary-based strategy for anomalous scoring on RNNs.}
	\small

	\begin{center}
	\begin{tabular}{|c|cccc|}
	\hline
	\multirow{3}{*}{\bf Datasets}&\multicolumn{2}{c}{\bf Random}&\multicolumn{2}{c|}{\bf Boundary-based}\\
	\cline{2-5}
	&\multirow{2}{*}{\bf means}&{\bf lower}&\multirow{2}{*}{\bf means}&{\bf lower}\\
	&&{\bf bound}&&{\bf bound}\\
	\hline
	{ecoli}&0.8665&0.8309&0.8936&{\bf 0.8834}\\
	{glass}&0.4247&0.3396&0.4141&{\bf 0.4063}\\
	{iris}&0.8376&0.6937&0.8621&{\bf 0.8415}\\
	{sonar}&0.5287&0.5200&0.5251&{\bf 0.5201}\\
	{vehivel}&0.6236&{\bf  0.5227}&0.6112&0.5195\\
	{ionosphere}&0.5029&{\bf 0.5025}&{ 0.5035}&{\bf 0.5025}\\
	{segment}&0.8385&0.7265&{ 0.8491}&{\bf 0.8491}\\
	{synthentic control}&0.8028&0.6695&{ 0.8789}&{\bf 0.8183}\\
	{mfeat-fourier}&0.7867&0.6680&{ 0.8256}&{\bf 0.7923}\\
	{mfeat-karhunen}&0.8712&{\bf 0.8100}&{ 0.8747}&0.8046\\
	{mfeat-zerike}&0.8510&0.8072&0.8510&{\bf 0.8372}\\
	{wavefore-5000}&0.6048&0.4623&{ 0.6224}&{\bf 0.5533}\\
	{ optdigits}&{ 0.8834}&0.7972&0.8671&{\bf 0.8479}\\	
	{ letter}&{ 0.9044}&0.8095&0.9005&{\bf 0.8757}\\	
	{ avilia}&{0.7125}&0.6079&0.7121&{\bf 0.6978}\\
	
	\hline
	{\bf Means}&0.7360&0.6512&{\bf 0.7461}&{\bf 0.7116}\\

	\hline
	\end{tabular}
	\end{center}
\end{table}

\subsubsection{Comparison with benchmark algorithms} We then compare the proposed algorithm (simplified hybrid index) with three classical hierarchical clustering algorithm and the state-of-the-art methods PERCH \cite{Kobren 2017} and RSC \cite{Xie 2020}. Because the CURE, Chameleon and RSC are parameter-sensitive, in this experiment, default parameters which were recommended by the literature \cite{Guha 2001, Karypis 1999, Xie 2020} are adopted in the two algorithms. As illustrated in Fig. 7 and Fig. 8, the proposed algorithm SRSC\footnote{Algorithm implementation codes and related data resources are available at https://github.com/wenboxie0211/SRSC.git.} (short for {\bf S}cored {\bf R}NN{\bf s} based {\bf C}lustering) is overall better than other benchmarks, as the result curve, marked in red, is above other ones in most cases. 

\begin{figure*}
\label{fig7}
\centering
  \includegraphics[width=\textwidth]{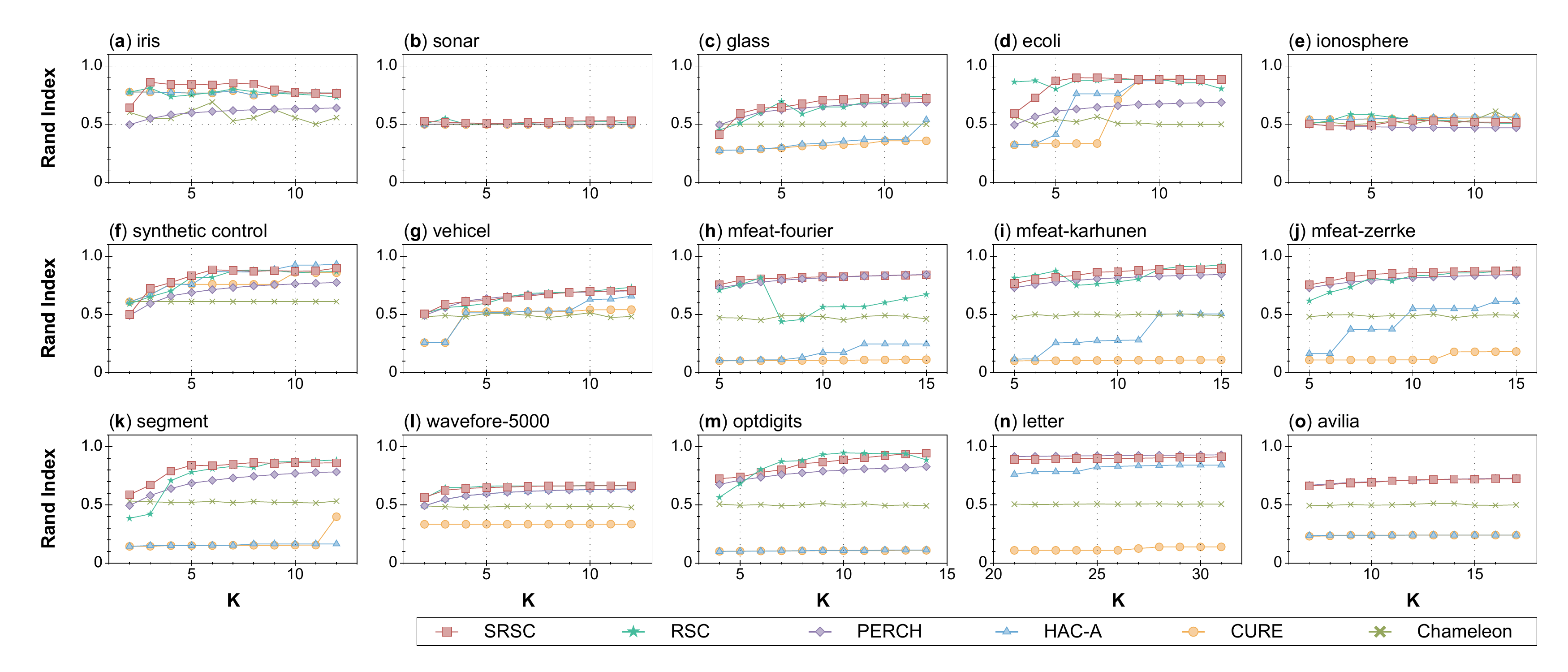}  
\caption{{\bf The Rand Index of the proposed algorithm and benchmark algorithms on fifteen UCI datasets.} Each point represents the average Rand Index of the corresponding algorithm based on the number of  clusters $K$. Notice that the range of the $X$-axis depends on datasets'  actual class-label amount, i.e., $\mathcal{K}$: if $\mathcal{K} \ge 7$, $K \in [\mathcal{K}-7,\mathcal{K}+7]$; otherwise, $K \in [2, 12]$. }
\end{figure*}

\begin{figure*}
\label{fig8}
\centering
  \includegraphics[width=\textwidth]{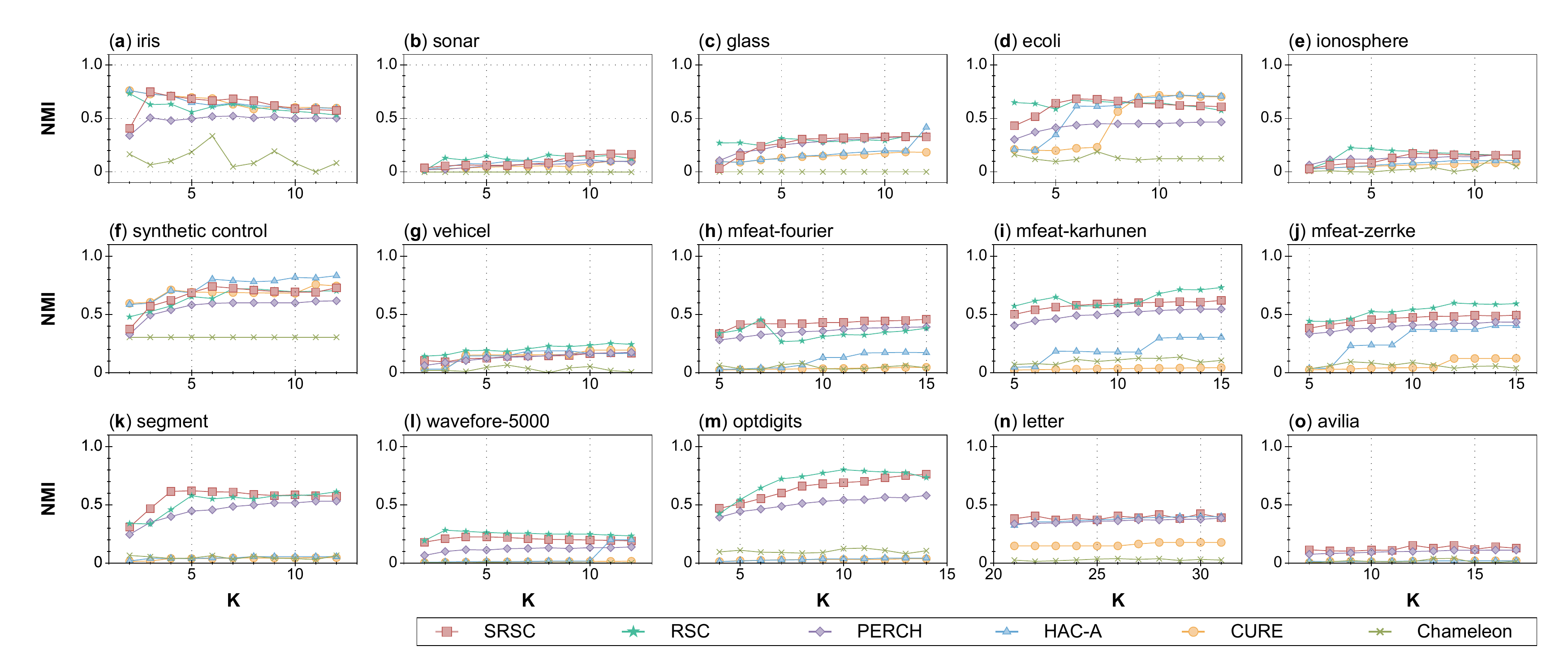}  
\caption{{\bf The Normalized Mutual Information of the proposed algorithm and benchmark algorithms on fifteen UCI datasets.} Each point represents the average Normalized Mutual Information of the corresponding algorithm based on the number of  clusters $K$. Notice that the range of the $X$-axis depends on datasets'  actual class-label amount, i.e., $\mathcal{K}$: if $\mathcal{K} \ge 7$, $K \in [\mathcal{K}-7,\mathcal{K}+7]$; otherwise, $K \in [2, 12]$. }
\end{figure*}

More clearly, we compare the details of ground-truth on each data set in Table IV (Rand Index) and Table V (NMI). Table IV shows that the proposed algorithm gets highest scores of Rand Index on nine data sets out of the fifteen data sets, and the mean of Rand Index for the proposed algorithm is also the best. Similarly, Table V shows that the proposed algorithm gets highest scores of NMI on eight data sets out of the fifteen data sets as well, and also, the mean of NMI for the proposed algorithm is the best. Note that due to insufficient memory, the baseline, RSC, cannot complete the clustering process on $letter$ and $avilia$, resulting in some of the null values in Table IV and V as well as the absence in Table VI and Fig. 9.

\begin{table}
	\caption{Rand Index of the proposed algorithm and benchmark algorithms on the ground-truth of UCI data sets.}
	\tiny
	\begin{center}
	\begin{tabular}{|c|cccccc|}
	\hline
	{\bf Datasets}&{\bf HAC-A}&{\bf CURE}&{\bf Chameleon}&{\bf PERCH}&{\bf RSC}&{\bf SRSC}\\
	\hline
    { iris}&0.7771&0.7771&0.7598&0.5490&0.8120&{\bf 0.8621}\\	
    {sonar}&0.4993&0.4993&0.4999&0.5004&0.5032&{\bf 0.5251}\\
	{glass}&0.2765&0.2765&{\bf 0.5011}&0.4509&{0.4948}&0.4141\\	
    {ecoli}&0.7618&{0.7083}&0.5655&0.6593&0.8854&{\bf 0.8936}\\
	{ ionosphere}&{\bf 0.5401}&{\bf 0.5401}&0.5158&0.5011&0.5035&0.5035\\
	{ synthentic control}&{ 0.8703}&0.7600&0.6048&0.7146&0.8200&{\bf 0.8789}\\
    { vehivel}&0.5154&0.5244&0.4904&{\bf 0.6180}&0.5325&{ 0.6112}\\
	{ mfeat-fourier}&0.1730&0.1068&0.5052&{ 0.8159}&0.5651&{\bf 0.8256}\\
	{ mfeat-karhunen}&0.2786&0.1067&0.5198&0.8161&0.7815&{\bf 0.8747}\\
	{ mfeat-zerike}&0.5499&0.1115&0.4943&0.8152&0.8340&{\bf 0.8510}\\
    { segment}&0.1531&0.1531&0.5330&0.7317&0.8311&{\bf 0.8491}\\
	{ wavefore-5000}&0.3350&0.3335&0.4931&0.5479&{\bf 0.6489}&{0.6224}\\
	{ optdigits}&0.1092&0.1064&0.5291&0.7893&{\bf 0.9322}&{0.8671}\\	
	{ letter}&0.8312&0.1096&0.5064&{\bf 0.9237}&--&0.9005\\	
	{ avilia}&0.4608&0.2402&0.5171&{ 0.7114}&--&{\bf 0.7121}\\
	
	\hline
	{\bf Means}&0.4754&0.3569&0.5357&0.6792&--&{\bf 0.7461}\\
	\hline
	\end{tabular}
	\end{center}
\end{table}

\begin{table*}
	\caption{NMI of the proposed algorithm and benchmark algorithms on the ground-truth of UCI data sets.}
	\tiny
	\begin{center}
	\begin{tabular}{|c|cccccc|}
	\hline
	{\bf Datasets}&{\bf HAC-A}&{\bf CURE}&{\bf Chameleon}&{\bf PERCH}&{\bf RSC}&{\bf SRSC}\\
	\hline
    { iris}&0.7284&0.7284&0.0657&0.5069&0.6302&{\bf 0.7498}\\	
    {sonar}&0.0209&0.0209&0.0025&0.0198&0.0095&{\bf 0.0368}\\
	{glass}&0.0951&0.0951&0.0012&0.1062&{\bf 0.2705}&0.0309\\	
    {ecoli}&0.6211&0.5641&0.1283&0.4498&0.6501&{\bf 0.6652}\\
	{ ionosphere}&0.0259&0.0259&0.0055&{\bf 0.0643}&0.0268&0.0275\\
	{ synthentic control}&{\bf 0.8038}&0.6909&0.3049&0.5963&0.6377&0.7416\\
    { vehivel}&0.1437&0.1510&0.0108&0.1034&{\bf 0.1888}&0.1216\\
	{ mfeat-fourier}&0.1302&0.0348&0.0337&0.3581&0.3113&{\bf 0.4309}\\
	{ mfeat-karhunen}&0.1777&0.0348&0.1100&0.5131&0.5796&{\bf 0.5992}\\
	{ mfeat-zerike}&0.3711&0.0430&0.0879&0.4091&{\bf 0.5428}&{0.4751}\\
    { segment}&0.0439&0.0443&0.0343&0.4858&0.5652&{\bf 0.6104}\\
	{ wavefore-5000}&0.0093&0.0068&0.0045&0.0999&{\bf 0.2827}&{0.2090}\\
	{ optdigits}&0.0358&0.0319&0.0910&0.5300&{\bf 0.7741}&{ 0.6806}\\	
	{ letter}&0.3786&0.1477&0.0382&0.3625&--&{\bf 0.4038}\\	
	{ avilia}&0.0181&0.0172&0.0139&0.0998&--&{\bf 0.1534}\\
	
	\hline
	{\bf Means}&0.2402&0.1758&0.0622&0.3137&--&{\bf 0.3957}\\
	\hline
	\end{tabular}
	\end{center}
\end{table*}

To verify that the distinctions of clustering quality (Rand Index and MNI) between the proposed algorithm and four other benchmark algorithms (the RSC algorithm is abandoned because of the null values) are statistically significant, we conduct the pairwise $t$-test \cite{Student 1908}, in which $t$-value is a statistic showing the overall difference between paired samples, $p$-value refers to the probability of null hypothesis, the smaller the $p$-value, the more significant the difference in clustering quality. Initially, we define the null hypothesis and the alternative hypothesis as $H_0: R-R^*=0$ and $H_a: R-R^* \neq 0$, in which $R$ and $R^*$ stand for Rand Index (or NMI) over the proposed algorithm and one of the benchmarks, respectively. From Table VI, we can safely conclude that, as PERCH has the highest $p$-value $2.5336 \times 10^{-2}$ in Rand Index and HAC-A has the highest $p$-value $1.903 \times 10^{-2}$ in MNI, the proposed algorithm performs better than the three benchmarks at 5\%  statistical significance level.

\begin{table}
\caption{ The results of pairwise $t$-test on fifteen UCI data sets. }
\footnotesize
\begin{center}
\begin{tabular}{|c|cc|cc|}
\hline
\multirow{2}{*}{}&\multicolumn{2}{|c|}{\textbf{Rand Index}}&\multicolumn{2}{|c|}{\textbf{MNI}}\\
\cline{2-5}
&{$t$-value}&{$p$-value}&{$t$-value}&{$p$-value}\\
\hline
{\bf HAC-A}&3.604&$3.207^{\times 10^{-3}}$&2.649&\bm{$1.903^{\times 10^{-2}}$}\\
{\bf CURE}&4.289&$8.809^{\times 10^{-4}}$&3.516&$3.420^{\times 10^{-3}}$\\
{\bf Chameleon}&4.950&$2.651^{\times 10^{-4}}$&5.719&$5.303^{\times 10^{-5}}$\\
{\bf PERCH}&2.525&\bm{$2.533^{\times 10^{-2}}$}&3.670&$2.518^{\times 10^{-3}}$\\
% {\bf }&{\bf HAC-A}&{\bf CURE}&{\bf Chameleon}&{\bf PERCH}\\
% \hline
% {\bf $t$-value}&3.6040&4.2891&4.9506&2.5250\\
% %{\bf $p$-value}&$3.2074 \times 10^{-3}$&$8.8092 \times 10^{-4}$&$2.6514 \times 10^{-4}$&$2.5336 \times 10^{-2}$\\
% {\bf $p$-value}&$3.2074^{\times 10^{-3}}$&$8.8092^{\times 10^{-4}}$&$2.6514^{\times 10^{-4}}$&$2.5336^{\times 10^{-2}}$\\
%\begin{tabular}{|c|cc|}
%\hline
%{\bf }&{\bf HAC-A}&{\bf CURE}\\
%\hline
%{\bf $t$-value}&3.6040&4.2891\\
%
%{\bf $p$-value}&$3.2074^{\times 10^{-3}}$&$8.8092^{\times 10^{-4}}$\\
%\hline
%{\bf }&{\bf Chameleon}&{\bf PERCH}\\
%\hline
%{\bf $t$-value}&4.9506&2.5250\\
%
%{\bf $p$-value}&$2.6514^{\times 10^{-4}}$&$2.5336^{\times 10^{-2}}$\\
\hline
\end{tabular}
\end{center}
\end{table}

\subsubsection{Robustness against Input Sequence} The input sequence affects the Boundary Detection process, resulting in a fluctuation in clustering quality. Therefore, we analyze the robustness of SRSC against the input sequence by comparing its clustering qualities (Rand Index and MNI) with PERCH (which is sensitive to the input sequence) on variances. As shown in Table VII, the variances of SRSC are close to PERCH in Rand Index, meanwhile, SRSC gets significant advantages in NMI.

\begin{table*}
	\caption{Comparison of the standard deviation of the accuracy (Rand Index and NMI) of PERCH and Election Tree.}
	\begin{center}
	\footnotesize
	\begin{tabular}{|c|cc|cc|}
	\hline
	\multirow{2}{*}{\bf Datasets}&\multicolumn{2}{|c|}{\bf Rand Index}&\multicolumn{2}{|c|}{\bf NMI}\\
	\cline{2-5}
    \multirow{2}{*}{}&{\bf PERCH}&{\bf SRSC}&{\bf PERCH}&{\bf SRSC}\\
	\hline
    iris&$6.71\times 10^{-2}$&\bm{$1.41\times 10^{-2}$}&$1.33\times 10^{-1}$&\bm{$6.78\times 10^{-4}$}\\	
    {sonar}&$1.51\times 10^{-2}$&\bm{$6.70\times 10^{-3}$}&$2.25\times 10^{-2}$&\bm{$9.95\times 10^{-3}$}\\
	{glass}&\bm{$1.99\times 10^{-2}$}&$3.18\times 10^{-2}$&$5.29\times 10^{-2}$&\bm{$1.96\times 10^{-3}$}\\	
    {ecoli}&$2.19\times 10^{-2}$&\bm{$1.12\times 10^{-3}$}&$5.61\times 10^{-2}$& \bm{$2.92\times 10^{-2}$}\\
	{ ionosphere}&$2.90\times 10^{-2}$&\bm{$1.50\times 10^{-4}$}&$4.57\times 10^{-2}$&\bm{$1.59\times 10^{-3}$}\\
	{ synthentic control}&{$2.41\times 10^{-2}$}&\bm{$1.69\times 10^{-2}$}&$7.01\times 10^{-2}$&\bm{$2.38\times 10^{-2}$}\\
    { vehivel}&\bm{$1.71\times 10^{-2}$}&$3.31\times 10^{-2}$&$3.55\times 10^{-2}$&\bm{$1.03\times 10^{-2}$}\\
	{ mfeat-fourier}&\bm{$7.36\times 10^{-3}$}&$1.08\times 10^{-2}$&$3.54\times 10^{-2}$&\bm{$1.26\times 10^{-2}$}\\
	{ mfeat-karhunen}&\bm{$9.98\times 10^{-3}$}&$2.22\times 10^{-2}$&$4.49\times 10^{-2}$&\bm{$4.00\times 10^{-2}$}\\
	{ mfeat-zerike}&\bm{$7.40\times 10^{-3}$}&$1.26\times 10^{-2}$&$3.51\times 10^{-2}$&\bm{$1.99\times 10^{-2}$}\\
    { segment}&$2.00\times 10^{-2}$&\bf{0.00}&$5.82\times 10^{-2}$&\bf{0.00}\\
	{ wavefore-5000}&\bm{$1.92\times 10^{-2}$}&$2.79\times 10^{-2}$&$4.83\times 10^{-2}$&\bm{$3.32\times 10^{-2}$}\\
	{ optdigits}&$1.47\times 10^{-2}$&\bm{$1.08\times 10^{-2}$}&$4.19\times 10^{-2}$&\bm{$2.33\times 10^{-2}$}\\	
	{ letter}&\bm{$9.55\times 10^{-3}$}&$1.40\times 10^{-2}$&\bm{$1.11\times 10^{-2}$}&{ $1.16\times 10^{-2}$}\\	
	{ avilia}&\bm{$4.28\times 10^{-3}$}&$1.37\times 10^{-2}$&\bm{$1.61\times 10^{-2}$}&$1.74\times 10^{-2}$\\
	\hline
	{\bf Means}&$1.91\times 10^{-2}$&\bm{$1.44\times 10^{-2}$}&$4.71\times 10^{-2}$&\bm{$1.40\times 10^{-2}$}\\
	\hline
	\end{tabular}
	\end{center}
\end{table*}

\section{Complexity Analysis and CPU Time}

Benefiting from the representatives of clusters (roots), the consumption of calculation is decreasing along with the level of cluster tree, since the amount of roots will be reduced by at least half for each iteration. 

There are two key steps in each iteration for the proposed algorithm: (i) construction of sub-MSTs and (ii) roots detection. In the first step, data points link together depending on the searching algorithm and the type of dataset. For instance, in network data sets, the average degree is the main factor that affects the time-consumption of nearest neighbor searching, while in text or image related data sets, the nearest neighbor searching operation should be performed on a fully connected graph which is supposed to incredibly increase the time complexity. Generally, for multi-dimensional data, algorithms such as K-D Tree are preferred to find data points' nearest neighbors \cite{Bentley 1975}, of which the time-complexity is $O(n\log n)$. Likewise, the construction of sub-MSTs mainly relies on the nearest neighbors search. Hence, the time-complexity of the first step in one iteration is also $O(n\log n)$. Considering the worst case, given a dataset $X=\{x_i\}^n$, each data point owns the reciprocal nearest neighbor data point. Also, every root that is detected in each iteration has its reciprocal nearest neighbor root. Similar to the analysis made in our previous work \cite{Xie 2020}, the number of the roots is exactly the half size of data points in each iteration, i.e., we will get $\frac{n}{2}$ of roots in the first iteration, and $\frac{n}{4}$ of roots in the second iteration, and so forth. Therefore, the time-complexity of the first step could be calculated as,
\begin{eqnarray}
	T_1 &=&\left(\underbrace{n\log n}_{iteration\#1}\right)+\left(\underbrace{\frac{n}{2^1}\log\frac{n}{2^1}}_{iteration\#2}\right)+\left(\underbrace{\frac{n}{2^2}\log\frac{n}{2^2}}_{iteration\#3}\right)+\cdots \nonumber\\
	&<&\log n\sum_{i=0}\frac{n}{2^i} \nonumber\\
	&<&2n\log n .\nonumber
\end{eqnarray}

In the second step, the time-complexity is based on the calculation of indices that depends on the BFS (breadth-first searching), of which the time-complexity is $O(n)$ in one iteration. In addition, when the scoring method fails, the boundary-sampling process will be triggered, and every root will be compared to the boundary point pairs. Also considering the worst case we mentioned above, the time-complexity of the second step could be calculated as, 
\begin{eqnarray}
	T_2 &=& \left(\underbrace{n\!+\!n\log n}_{iteration\#1}\right)\!+\!\left(\underbrace{\frac{n}{2^1}\!+\!\frac{n}{2^1}\log \frac{n}{2^1}}_{iteration\#2}\right)\!+\!\left(\underbrace{\frac{n}{2^2}+\frac{n}{2^2}\log \frac{n}{2^2}}_{iteration\#3}\right)\!+\!\cdots \nonumber\\
	&=& \sum_{i=0} \left(\frac{n}{2^i}\!+\!\frac{n}{2^i}\log\frac{n}{2^i}\right)
	\nonumber\\
	&<& \sum_{i=0}\frac{n}{2^i} +\log n \sum_{i=0}\frac{n}{2^i} \nonumber\\
	&<& 2n+2n\log n .\nonumber
\end{eqnarray}

%\end{equation*}
Finally, the overall time consumption of the proposed algorithm $T=T_1+T_2<2n+4n\log n$, thus the time-complexity is $O(n\log n)$, which is much better than many classical hierarchical clustering algorithms. Meanwhile, in the worst case, a maximum of $\log n$ boundary point pairs are required to be resided in memory for scoring anomalous RNNs nodes, that is, the space-complexity of the proposed algorithm is $O(\log n)$.

We then verify the efficiency of SRSC by checking its required CPU Time on different sizes of random data sets. As shown in Fig. 9, we compare the CPU Time of SRSC with other baselines on different sizes of artificial data sets that are generated randomly, in which, the pow-law is used to fit the relation between CPU times and data sizes for each algorithm. The exponents emphasized with corresponding colors in Fig. 9 show that the proposed algorithm is of better efficiency than other benchmarks.

\begin{figure}
\label{fig9}
\centering
  \includegraphics[width=0.7\linewidth]{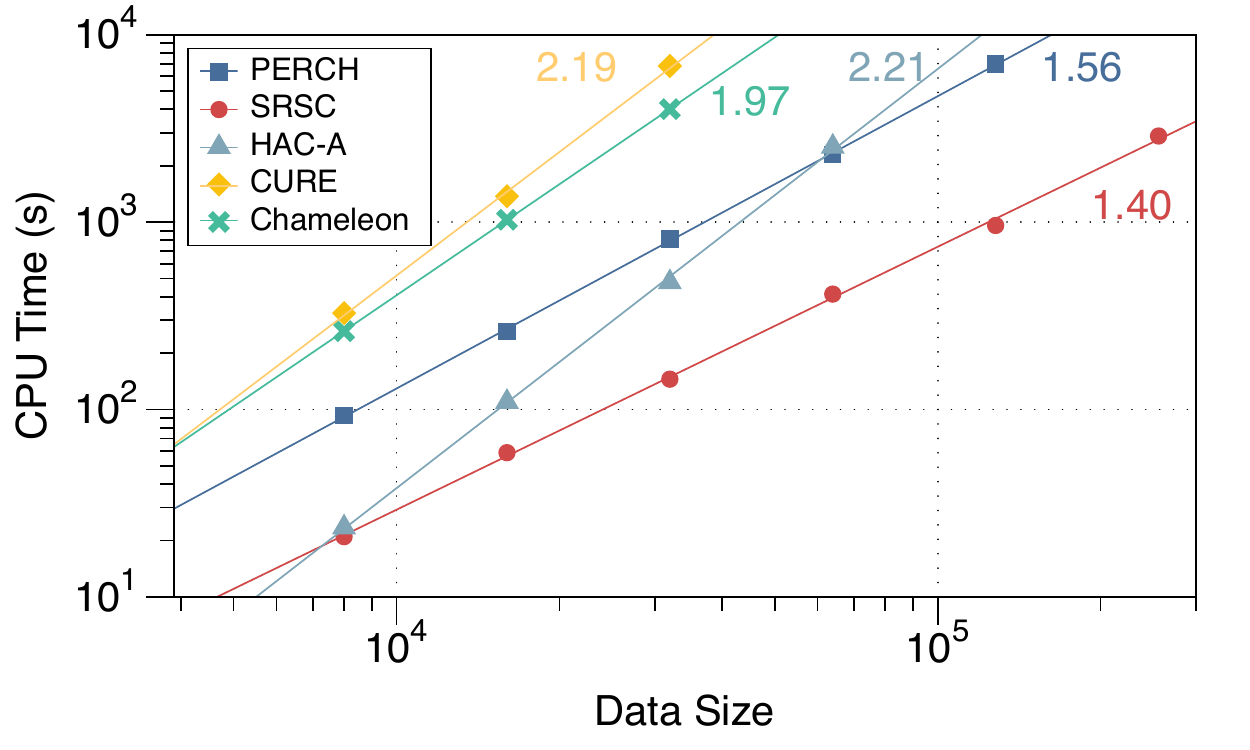}  
\caption{{\bf CPU Times on random data sets.} }
\end{figure}

\section{Conclusion}

In this article, we mainly worked on the hierarchical clustering, an important branch of clustering algorithms. A key issue for the hierarchical clustering models is to decide the root for a cluster properly and reasonably, which is also left over from our previous work \cite{Xie 2020}, as a better selected root or the representative data point of the detected cluster is going to potentially impact the final results of the clustering. In this study, we proposed four independent indices and two hybrid measures to score the reciprocal nearest neighbor data points with a graph-based theory. In addition, we further developed a theoretical framework to successfully handle the cases of anomalous scoring on RNNs. Extensive experiments on 15 datasets verified that the proposed algorithm significantly outperform the benchmark algorithms, including 3 classical hierarchical algorithms and two state-of-the-art methods PERCH and RSC. Moreover, the model analyses concluded that the proposed algorithm has the time complexity by $O(n\log n)$ and the space complexity by $O(\log n)$, both of which are much lower than the traditional hierarchical clustering models.

Despite the encouraging results we have achieved in this study, we also noticed some interesting directions which could be regarded as the extension of the future work. The strategy we devised for detecting the root is effective yet hard to say it is perfect already. Some new explorations are possible to conduct as we have seen some advanced node scoring methods recently appeared in the study of complex network analysis \cite{Lu 2016, Lee 2017}. On the other hand, currently, the role of root is not clear in the procedure of hierarchical clustering, namely why it can bond the lower-level clustering and upper-level clustering in a dendrogram effectively. Some further studies on this regard are imminent for us to carry out which would broaden our understandings on this issue.

\section*{Acknowledgment}
\noindent {\bf Funding:} 
 This work is partially supported by the National Natural Science Foundation of China under Grant No. 61703074, and the Fundamental Research Funds for the Central Universities under Grant No. ZYGX2016J196, and the Young Scholars Development Fund of SWPU under Grant No. 20219\\9010142.

%\noindent {\bf Author contributions:} W.B.X., and Z.L. conceived the research and wrote the manuscript, W.B.X. performed the experiments and analyzed the data. 

%\noindent {\bf Competing interests:} The authors declare that they have no competing interests.

% Can use something like this to put references on a page
% by themselves when using endfloat and the captionsoff option.

% trigger a \newpage just before the given reference
% number - used to balance the columns on the last page
% adjust value as needed - may need to be readjusted if
% the document is modified later
%\IEEEtriggeratref{8}
% The "triggered" command can be changed if desired:
%\IEEEtriggercmd{\enlargethispage{-5in}}

% references section

% can use a bibliography generated by BibTeX as a .bbl file
% BibTeX documentation can be easily obtained at:
% http://mirror.ctan.org/biblio/bibtex/contrib/doc/
% The IEEEtran BibTeX style support page is at:
% http://www.michaelshell.org/tex/ieeetran/bibtex/
%\bibliographystyle{IEEEtran}
% argument is your BibTeX string definitions and bibliography database(s)
%\bibliography{IEEEabrv,../bib/paper}
%
% <OR> manually copy in the resultant .bbl file
% set second argument of \begin to the number of references
% (used to reserve space for the reference number labels bo

\section*{References}

\end{document}